\newcommand{\WS}{\mathrm{WILSON}}
\newcommand{\UCCT}{\mathrm{UCCT}}
\theoremstyle{plain}
\newtheorem{theorem}{Theorem}[section]
\newtheorem{lemma}[theorem]{Lemma}
\newtheorem{corollary}[theorem]{Corollary}
\theoremstyle{definition}
\theoremstyle{remark}
\newcommand{\Npos}{6}          
\newcommand{\Nlayers}{5}       
\newcommand{\DX}{0.9}          
\newcommand{\DY}{0.6}          
\newcommand{\LoopIStart}{2}
\newcommand{\LoopIEnd}{3}
\newcommand{\LoopLStart}{1}
\newcommand{\LoopLEnd}{2}
\newcommand{\DotSize}{1.3pt}
\newcommand{\AttnLift}{0.35}   
\newcommand{\figref}[1]{Fig.~\getrefnumber{#1}}
\pgfplotsset{compat=1.18}
\icmltitlerunning{Inverse-Free Wilson Loops for Transformers}
  \renewcommand{\icml@copyrighthdr}{}%
  \renewcommand{\icml@copyrighttext}{}%
  \renewcommand{\icmlfancyfoot}[1]{}%
\icmltitlerunning{Inverse-Free Wilson Loops for Transformers}
\begin{document}
    
\twocolumn[
\icmltitle{Inverse-Free Wilson Loops for Transformers: A Practical Diagnostic for Invariance and Order Sensitivity}

\begin{icmlauthorlist}
\icmlauthor{Edward Y. Chang, Stanford University}{stan}
\icmlauthor{Ethan Y. Chang, UIUC}{uiuc}
\end{icmlauthorlist}

\icmlkeywords{Large Language Models, Group Theory, Curvature, Diagnostics}

\vskip 0.3in  
]


\begin{abstract}
Large language models can change answers under harmless edits that matter in practice: RAG outputs flip when passages are reordered, fine-tuning erodes invariances learned at pretraining, debate or chain-of-thought prompts take path-dependent routes, and compiler fusion or reordering perturbs logits near decision boundaries. These failures violate intended invariances, break continuous integration, and force teams to trade safety for speed. The effects are small yet distributed across layers and positions, sensitive to context length and evaluation order, and costly to repair with retraining or formal verification.
We present $\WS$, a minimal post-hoc diagnostic suite that converts simple loop and reordering checks on internal representations into system signals. $\WS$ combines an inverse-free curvature map over positions and layers, computed with JVPs and Hutchinson probes, with activation-level commutators that flag reorder risk. Signals are cheap to compute, model-agnostic for standard Transformers, and exported as thresholds and CSV artifacts for orchestrators. This enables concrete actions: guard RAG against order effects, catch fine-tuning regressions, stabilize debate pathways and long multi-turn contexts, and gate fusions or reorders in deployment. In short, WILSON helps anticipate failures and approve safe optimizations so reliability and throughput can improve together without changing model architecture or training.
\end{abstract}

\section{Introduction}
\label{sec:intro}

Modern LLM deployments need diagnostics that (i) anticipate when model behavior will violate intended \emph{invariances} (e.g., code $\alpha$-renaming; algebraic equivalence), and (ii) tell systems when it is safe to \emph{fuse, reorder, or parallelize} operators without changing outputs. We introduce $\WS$, a minimal post-hoc diagnostic suite that converts simple loop and reordering checks on internal representations into system signals. We call the suite $\WS$ in reference to Wilson loops that measure holonomy; the name is a mnemonic rather than an acronym.

\paragraph{Why this matters now.}
LLMs are embedded in pipelines where \emph{behavioral stability} is a hard requirement: builds must pass CI, compilers seek fusions or reorders for latency, and products promise consistent outcomes across benign edits. In practice we see: (1) \textbf{invariance breaks} under semantics-preserving transforms such as identifier renaming, algebraic equivalence, and template-preserving paraphrase (Fig.~\ref{fig:alpha-rename}); (2) \textbf{order sensitivity}, where seemingly safe operator changes alter outputs, exacerbated by floating-point non-associativity and long-context phase drift; and (3) \textbf{seed instability} that complicates interpretability and audits. These issues surface in real applications: RAG answers can flip when retrieved passages are reordered, fine-tuning can erode invariances learned at pretraining, and debate or chain-of-thought prompts can take path-dependent routes to different conclusions. Such failures raise costs (canaries, rollbacks), obscure root causes, and block safe speedups.

\paragraph{Why not just train it away?}
Architectural equivariance and invariance-aware training help, but they (a) constrain model classes, (b) require retraining, and (c) still leave open \emph{online} decisions about where fusion or reordering is safe for the \emph{deployed} checkpoint. Saliency or probing can be informative but are gauge-unstable across seeds. Mechanistic case studies are insightful but hard to scale. Formal verification is too heavy for routine inference. We instead seek \emph{post-hoc, cheap, gauge-stable} signals that travel with any Transformer and slot into existing systems.

\subsection*{Two motivating examples (why a diagnostic is needed)}
\label{sec:examples}

\paragraph{Example 1: $\alpha$-renaming should not change behavior (but often does).}
The two programs below are \emph{semantically identical}. A robust code LLM should produce the same decision (e.g., \texttt{pass@k}) and similar next-token probabilities under either identifier scheme. An \textbf{invariance failure} occurs when the invariance ratio $\mathrm{IR}(x) < 1 - \delta$ (Sec.~\ref{sec:metrics}).

\begin{lstlisting}[
  language=Python, label={lst:alpha-rename},
  basicstyle=\ttfamily\footnotesize,
  breaklines=true, breakatwhitespace=true,
  columns=fullflexible, keepspaces=true,
  postbreak=\mbox{\textcolor{gray}{$\hookrightarrow$}\space},
  caption={Alpha-renaming: semantics preserved, identifiers changed.}
]
# (A) Original
def is_balanced(s: str) -> bool:
    stk = []
    for ch in s:
        if ch == '(':
            stk.append(ch)
        elif ch == ')':
            if not stk: return False
            stk.pop()
    return not stk

# (B) Renamed (alpha-equivalent)
def is_balanced(u: str) -> bool:
    Z = []
    for c in u:
        if c == '(':
            Z.append(c)
        elif c == ')':
            if not Z: return False
            Z.pop()
    return not Z
\end{lstlisting}

\emph{What goes wrong.} Identifier statistics from pretraining can shift tokenization and local logits enough to change \texttt{pass@k} or a critical beam step. Our diagnostic goal is to predict such failures \emph{before} they happen. High curvature $\kappa_{\mathrm{inv}}$ near the relevant positions and layers flags risk, and elevated commutators $\Delta_{A,B}$ identify order-sensitive submodules that amplify the effect.

\paragraph{Example 2: ``Safe'' operator changes can flip the next token (order sensitivity).}
Even mathematically equivalent rewrites can differ numerically in finite precision. Consider attention logits $z=\tfrac{QK^\top}{\sqrt d}+M$. Two fused kernels compute $z$ with different accumulation orders:
\begin{footnotesize}
\[
\mathrm{Path\ A:}\;(\sum_i q_i k_i)/\sqrt d + m \quad \text{vs.} \quad
\mathrm{Path\ B:}\;\sum_i (q_i k_i/\sqrt d) + m,
\]
\end{footnotesize}
equal in $\mathbb{R}$ but \emph{not} in FP16 or BF16 due to rounding. A perturbation on the order of $10^{-3}$ can flip the top token.

\begin{lstlisting}[
  language=Python, label={lst:tie-flip},
  basicstyle=\ttfamily\footnotesize,
  breaklines=true, breakatwhitespace=true,
  columns=fullflexible, keepspaces=true,
  postbreak=\mbox{\textcolor{gray}{$\hookrightarrow$}\space},
  caption={Toy numeric: tiny logit changes at a tie boundary can flip the argmax.}
]
import numpy as np

# Near-tie logits at step t
zA = np.array([0.000,  0.001], dtype=np.float32)   # Path A
zB = np.array([-0.001, 0.001], dtype=np.float32)   # Path B

pA = np.exp(zA) / np.exp(zA).sum()
pB = np.exp(zB) / np.exp(zB).sum()

# Nudge by 1e-3 (within fused-kernel rounding)
zB_pert = zB + np.array([+0.0015, -0.0005], dtype=np.float32)
pB_pert = np.exp(zB_pert) / np.exp(zB_pert).sum()
# The argmax can flip when margins are razor thin.
\end{lstlisting}

\emph{What to measure.} We define \textbf{drift} $\delta(x)=\|y_{AB}-y_{BA}\|$ between outputs under two safe reorder or fuse options and use the \emph{commutator} $\Delta_{A,B}=\|A\!\circ\!B - B\!\circ\!A\|_F$ as a predictor of this drift. In parallel, the \emph{inverse-free curvature} $\kappa_{\mathrm{inv}}$ (Eq.~\eqref{eq:invfree-kappa}) localizes \emph{where} order sensitivity is likely (high-curvature loops), allowing systems to avoid reorders or fusions there or to add checks.

\paragraph{Our route in one line.}
Treat the model as a discrete bundle over (position, layer); form small plaquette loops from vertical layer transports and horizontal attention transports; estimate \emph{inverse-free} holonomy $\kappa_{\mathrm{inv}}$ with $r$ Hutchinson JVP probes per loop to localize order sensitivity; compute activation commutators $\Delta_{A,B}$ on a calibration batch to predict reorder risk for candidate submodules; apply an $O(d)$ orthogonal gauge fix (whitening plus Procrustes) to 
align logs across seeds;
compute $\kappa_{\mathrm{inv}}$ in the native basis; and emit curvature
and commutator scores as CSV with calibrated thresholds that gate fusions, reorders, and parallel routes in CI.

\textit{Terminology note.} We borrow terms such as holonomy and commutator for intuition, but we use discrete, data-dependent path differences on Transformer representations as engineering diagnostics rather than claims about continuous field theories.

\paragraph{LLM-specific applications.}
While $\WS$ is model-agnostic, several high-confidence use cases emerge for LLM systems: prompt robustness under paraphrasing, RAG passage reordering sensitivity, fine-tuning regression detection, multi-turn drift monitoring, and chain-of-thought pathway stability (\S6.8). These applications require only diagnostic signals with no retraining. We also identify potential synergies with semantic anchoring frameworks such as $\UCCT$. Empirical validation of these connections remains future work (\S8.2).

\paragraph{How this helps semantic anchoring ($\UCCT$).}
Semantic anchoring \cite{chang2025UCCT} benefits from geometry-derived signals. Prefer low-$\kappa_{\mathrm{inv}}$ regions for anchor placement (flat loops), gate anchors with high local commutators, and maintain anchor health via gauge-fixed logs. We treat this as an application of our diagnostics (details in \S\ref{sec:systems}); nothing in our method depends on $\UCCT$.

\paragraph{Design goals.}
We seek diagnostics that are (G1) \textbf{predictive} of failures before they happen, (G2) \textbf{cheap} (JVP-only; matrix-free), (G3) \textbf{gauge-stable} across seeds and runs, (G4) \textbf{model-agnostic} for Transformers with standard residual blocks and RoPE or relative positions, and (G5) \textbf{systems-usable} with clear thresholds and CSV artifacts that feed orchestrators and planners.

\paragraph{Idea in brief.}
We view the network as a \emph{discrete bundle} over \emph{(position, layer)}. Vertical edges carry layer transports. Horizontal edges carry attention transports. We define an \textbf{inverse-free Wilson loop} that measures \emph{holonomy} using only JVPs with Hutchinson probes, yielding a curvature score $\kappa_{\mathrm{inv}}$ (Eq.~\eqref{eq:invfree-kappa}). Large curvature indicates \emph{non-commuting transports} and flags order-sensitive regions. Small curvature suggests safe fusions or reorders. In parallel, we compute \textbf{activation commutators} $\Delta_{A,B}$ between submodules to map order-sensitive pairs and relate them to output drift (Fig.~\ref{fig:diagnostic-suite}(b)). A light \textbf{orthogonal gauge fix} (whitening plus Procrustes; Fig.~\ref{fig:gauge-pipeline}) stabilizes features and logging across seeds in CI.

\paragraph{Contributions.}
\begin{itemize}[leftmargin=1.2em,itemsep=2pt]
\item \textbf{Minimal formalism.} Discrete transports on (position, layer) with $O(d)$ gauge. An \emph{inverse-free} curvature $\kappa_{\mathrm{inv}}$ computable by JVPs. Estimator error derivations and an empirical study of gauge behavior (Appx.).
\item \textbf{Actionable diagnostics.} Curvature scores for predicting invariance failures (ROC and AP; Fig.~\ref{fig:diagnostic-suite}(a)). Commutators for predicting drift under reorder or fuse (Fig.~\ref{fig:diagnostic-suite}(b)). Both expose safe \emph{parallel} regions and risky \emph{sequential} ones.
\item \textbf{Gauge-stable logging.} Whitening plus Procrustes reduces probe variance and stabilizes saliency ranks across seeds, improving CI reproducibility (Fig.~\ref{fig:gauge-pipeline}).
\item \textbf{Systems hooks.} CSV artifacts and thresholds integrate with an \emph{LLM-driven TestBench} and a \emph{CI or release gate} to prioritize metamorphic tests (e.g., $\alpha$-renaming orbits), schedule reorder or fuse canaries, and enforce acceptance thresholds (\S\ref{sec:systems}).
\item \textbf{Reproducible suite.} E1--E7 diagnostics with schemas, defaults, ablations, and a Colab recipe. Outputs are gauge-stable and append-only for forensic replay.
\end{itemize}

\paragraph{Impacts we target.}
(1) \textbf{Reliability}: detect invariance breaks early. (2) \textbf{Throughput}: safe fusion, reordering, and parallelization in low-curvature regions. (3) \textbf{Reproducibility}: gauge-stable logs for CI and audits. (4) \textbf{Cost}: diagnostics with at most 10--20\% overhead and concrete budget knobs.

\paragraph{Metrics (how we judge success).}
We evaluate six axes (\S\ref{sec:metrics}): (M1) \emph{IR} (invariance ratio) on orbits (Fig.~\ref{fig:alpha-rename}); (M2) \emph{AUC or AP} for curvature predicting failures (Fig.~\ref{fig:diagnostic-suite}(a)); (M3) \emph{commutator to drift correlation} (Fig.~\ref{fig:diagnostic-suite}(b)); (M4) \emph{gauge-stable CI} (variance reduction and rank stability); (M5) \emph{RoPE phase drift} versus depth and context length (Fig.~\ref{fig:rope-rotation}); (M6) \emph{overhead} (Fig.~\ref{fig:diagnostic-suite}(c)).

\paragraph{Methods (one-paragraph sketch).}
Vertical transports $T^{\text{layer}}_{i,\ell}\!\approx\!\partial h_{i,\ell+1}/\partial h_{i,\ell}$ and edge-wise horizontal transports $T^{\text{attn}}_{i\leftarrow j,\ell}\!\approx\!\partial h^{\text{out}}_{i,\ell}/\partial h^{\text{in}}_{j,\ell}$ define small loops on the grid. We avoid inverses by comparing two paths that end in the same fiber (Eq.~\eqref{eq:invfree-kappa}), estimating norms with $r$ Rademacher probes via JVPs. A frozen-softmax scan proposes hotspots. We confirm them with full JVPs. Activation commutators measure $\|A\!\circ\!B - B\!\circ\!A\|_F$ on a calibration batch and correlate with output drift under safe reorder or fuse. Gauge fix (whiten plus Procrustes) precedes logging, not curvature.

\paragraph{Experimental setup (overview).}
We will test small or medium open models (7B--13B Transformers with RoPE or relative positions) on code orbits (HumanEval or MBPP style $\alpha$-renaming), algebraic rewrites, synthetic RoPE phase shifts, and calibration batches for commutators. Hardware: A100-class GPUs, BF16 or FP16 with FP32 pins for LN and softmax JVPs. Seeds fixed. Deterministic kernels where available. $(r,k,m)$ defaults $(6,8,6)$. Top-$m$ neighbors by attention mass plus light random exploration. Full procedures appear in \S\ref{sec:setup}. Metrics in \S\ref{sec:metrics}.

\paragraph{Evaluation plan (pre-registered; no results in this version).}
We will report: (i) IR on orbits, (ii) ROC and PR AUC for curvature predicting invariance failures, (iii) Spearman $\rho$ and Pearson $r$ between commutator norms and reorder drift, (iv) gauge-stability deltas (variance ratios and Kendall-$\tau$), (v) RoPE phase-drift areas versus depth and context length, and (vi) end-to-end overhead with attribution (scan versus confirm). Null baselines (randomized curvature maps and random-init networks) and bootstrap CIs are included. Empirical values are intentionally omitted in this version and will appear in a subsequent update.

\paragraph{Scope and non-claims.}
This is a \emph{code-first} note on Transformer residual streams. We do not claim a continuous principal bundle for full models, nor that embeddings realize formal grammar groups. We position curvature and commutator signals as \emph{engineering diagnostics}, leaving broader theory and non-Transformer architectures to future work (\S\ref{sec:conclusion}, Appx.).
\section{Related Work}
\label{sec:related}

\paragraph{Missing invariance tests, fragile behavior, and reorder risk.}
Behavioral test suites show that modern NLP systems often fail simple invariances and controlled perturbations, motivating \emph{diagnostic} evaluation beyond accuracy (e.g., CheckList) \citep{ribeiro2020checklist}. Robustness Gym systematizes stress tests and highlights brittleness under template-preserving edits \citep{goel2021robustgym}. For code LLMs, semantics-preserving program transformations (notably identifier or variable renaming) can spuriously change predictions, motivating invariance-oriented evaluation and augmentation \citep{ankner2021varrename, wang2022recoderobustnessevaluationcode}. At the systems layer, ML compilers (TVM, XLA) aggressively fuse and reorder operators to improve throughput, but floating point non-associativity means reordering can alter numerics \citep{chen2018tvm, higham2002accuracy}. Without guards, such optimizations risk correctness regressions. For long-context LLMs, input order effects are well documented: models can underuse middle context and flip answers when premise or evidence order changes \citep{liu2023lostmiddle, chen2024premiseorder}. In retrieval-augmented pipelines, document order can interact with position bias; recent studies report mixed magnitudes \citep{cuconasu2025rags, zhang2024compensateposbias}.

\paragraph{Attempts to address the problems.}
Architectural symmetry has a long history: group-equivariant models \citep{cohen2016groupcnn, cohen2017steerable, bronstein2021geometric} \emph{bake in} invariances via layer design. Our approach is \emph{post hoc} and model-agnostic: we diagnose order sensitivity and invariance breaks at inference time. For long-context stability, positional schemes such as ALiBi, positional interpolation for RoPE, LongRoPE, and YaRN reduce phase or length drift but modify training or attention parameterization rather than \emph{diagnose} order sensitivity online \citep{press2021alibi, chen2023positioninterp, ding2024longrope, peng2024yarn}. In code robustness, RECode and related work use semantics-preserving transforms and augmentation to improve invariance but again focus on training-time fixes \citep{wang2022recoderobustnessevaluationcode}.

\paragraph{Evaluation frameworks.}
Beyond accuracy, comprehensive evaluations emphasize robustness, calibration, and efficiency, aligning with our diagnostic goals. HELM codifies multi-metric, scenario-based evaluation for LLMs \citep{liang2022helm}. Dynabench advances dynamic, human-in-the-loop stress testing \citep{kiela2021dynabench}. These motivate systematic invariance and order-sensitivity checks alongside standard metrics.

\paragraph{Alternatives and complements: representation alignment and probing.}
Representation-similarity measures (SVCCA, CKA) and orthogonal Procrustes alignment compare layers or seeds while acknowledging gauge freedom, motivating our $O(d)$ analysis gauge and gauge-invariant summaries \citep{raghu2017svcca, kornblith2019cka}. Probing critiques emphasize instability and confounds, arguing for controls and reproducibility, gaps our gauge-fixed logging aims to reduce \citep{hewitt2019designing}. Mechanistic interpretability develops circuit-level tools such as activation patching and progress measures; our commutator and holonomy signals provide a complementary, automatable geometry that flags \emph{order sensitivity} without manual circuit discovery \citep{elhage2021mathematical, nanda2023progress, zhang2024towards}.

\paragraph{Geometry and curvature in learning.}
Discrete curvature notions such as Ollivier and Forman curvature have characterized graphs and GNN dynamics, where curvature reflects structure and flow \citep{samal2018forman, ni2015ricci}. We differ by defining \emph{inverse-free, data-dependent holonomy on Transformer representations} using JVPs, turning curvature into a practical, per-layer and per-position diagnostic tied to invariance and operator order. Unlike graph-level discrete Ricci approaches \citep{samal2018forman, ollivier2007markov}, we compute inverse-free, data-dependent holonomy on internal representations via JVPs to yield granular diagnostics.

\paragraph{ML systems: fusion, scheduling, and safety.}
Compiler stacks (TVM, XLA) and inference schedulers search fusions or reorderings for latency and throughput \citep{chen2018tvm}. Floating point reordering risk and nondeterminism motivate \emph{gates} that decide when such transformations are safe, which is where our curvature and commutator signals act as low-overhead guards. Deterministic summation methods show how fixed accumulation orders can reduce variance across runs and hardware \citep{ahrens2020reprod}. Our signals also align with orchestration frameworks (e.g., the UCCT line of work on semantic anchoring and tool coordination \cite{chang2025UCCT}), where curvature can prioritize \emph{parallel} low-risk regions and enforce invariants during \emph{sequential} high-risk steps.

\paragraph{Physics foundations: groups, gauges, bundles, holonomy.}
Our geometric lens borrows classical ideas from mathematical and theoretical physics. Group symmetries and Noether’s theorem connect invariants to conserved quantities \citep{weyl1952symmetry, noether1918invariante}. Gauge theory formalizes local symmetry with connections and parallel transport on fiber bundles \citep{yang1954gauge, kobayashi1963foundations, nakahara2003gtp, frankel2011geometry}. Holonomy and Wilson loops quantify curvature via loop transports \citep{ambrose1953holonomy, wilson1974confinement}; geometric phases offer an operational view in quantum mechanics \citep{simon1983holonomy, berry1984phase}. Unlike gauge or group equivariant architectures that \emph{instantiate} symmetry in layers, we apply these notions \emph{post hoc} to produce model-agnostic, gauge-stable diagnostics for Transformers.

\section{Method}
\label{sec:method}

\paragraph{Notation.}
Let $B=\{1..T\}\times\{0..L\}$ denote the discrete base (token position, layer) and $F=\mathbb{R}^d$ the fiber (residual stream). A \emph{vertical} transport $T^{\mathrm{layer}}_{i,\ell}\!:\!F\!\to\!F$ maps $(i,\ell)\!\to\!(i,\ell{+}1)$; a \emph{horizontal} transport $T^{\mathrm{attn}}_{i\leftarrow j,\ell}\!:\!F\!\to\!F$ maps $(j,\ell)\!\to\!(i,\ell)$. We analyze in the \emph{orthogonal gauge} ($O(d)$). Unless noted, models are decoder-only Transformers with pre-LN blocks and RoPE/relative positions. Sampling knobs: probes $r$, targets $k$/layer, neighbors $m$/target.

This section gives a code-first recipe for computing: (i) permutation/positional checks, (ii) gauge-aware summaries of hidden states, (iii) parameter-space symmetries, (iv) task-orbit invariance, and (v) two geometry signals—\emph{activation commutators} and an \emph{inverse-free} holonomy score via JVPs. Diagnostic visualizations (heatmaps, curvature maps, ROC curves) appear in \S\ref{sec:diagnostics}.

\subsection{Permutation equivariance in self-attention: scope and limits}
\label{sec:method-permutation}
\paragraph{Derivation (no positions/masks).}
With $Q=XW_Q,\ K=XW_K,\ V=XW_V$ and row-softmax $\sigma$, $\mathrm{Attn}(X)=\sigma(QK^\top/\sqrt d)\,V$. For any permutation matrix $P$,
\[
PX\mapsto (PQ,PK,PV)\quad\text{and}
\]
\[
\sigma\!\Big(P\frac{QK^\top}{\sqrt d}P^\top\Big)=P\,\sigma\!\Big(\tfrac{QK^\top}{\sqrt d}\Big)P^\top,
\]
hence $\mathrm{Attn}(PX)=P\,\mathrm{Attn}(X)$ when \emph{no} positional signals/masks are present.

\paragraph{How symmetry breaks in practice.}
(i) Absolute/learned positions do not commute with $P$; (ii) causal/segment/padding masks select a non-permutation-invariant subgraph; (iii) variable length/padding induces row-dependent normalization.

\paragraph{Diagnostic D1.}
\emph{No-positions check}: $\epsilon_{\pi}=\|\mathrm{Attn}(PX)-P\mathrm{Attn}(X)\|_F/\|\mathrm{Attn}(X)\|_F$ over random $\pi$.  
\emph{Mask curve}: sweep context length under causal masks and plot $\epsilon_{\pi}(n)$ (plots in \S\ref{sec:diagnostics}).

\begin{figure*}[th]
\centering
\begin{tikzpicture}[node distance=8mm, >=Latex]
\tikzstyle{tok}=[draw, rounded corners, minimum width=9mm, minimum height=5mm, fill=cyan!10]
\tikzstyle{blk}=[draw, rounded corners, fill=blue!18, inner sep=3pt]
\node[tok] (x1) {x$_1$}; \node[tok, right=6mm of x1] (x2) {x$_2$}; \node[tok, right=6mm of x2] (x3) {x$_3$};
\node[blk, below=10mm of $(x1)!0.5!(x3)$] (attn) {Self-Attention (no positions/masks)};
\node[tok, below=10mm of attn, xshift=-14mm] (y1) {$y_1$}; \node[tok, right=6mm of y1] (y2) {$y_2$}; \node[tok, right=3mm of y2] (y3) {$y_3$};

\draw[->] (x1) -- (attn); \draw[->] (x2) -- (attn); \draw[->] (x3) -- (attn);
\draw[->] (attn) -- (y1); \draw[->] (attn) -- (y2); \draw[->] (attn) -- (y3);

\node[blk, right=28mm of attn, yshift=5mm] (perm) {Permutation $\pi$};
\node[tok, above=3mm of perm, xshift=-14mm] (xp1) {x$_{\pi(1)}$};
\node[tok, right=6mm of xp1] (xp2) {x$_{\pi(2)}$};
\node[tok, right=6mm of xp2] (xp3) {x$_{\pi(3)}$};
\node[blk, below=5mm of perm] (attn2) {Self-Attention (same weights)};
\node[tok, below=5mm of attn2, xshift=-14mm] (yp1) {$y_{\pi(1)}$};
\node[tok, right=6mm of yp1] (yp2) {$y_{\pi(2)}$};
\node[tok, right=6mm of yp2] (yp3) {$y_{\pi(3)}$};

\draw[->] (xp1) -- (attn2); \draw[->] (xp2) -- (attn2); \draw[->] (xp3) -- (attn2);
\draw[->] (attn2) -- (yp1); \draw[->] (attn2) -- (yp2); \draw[->] (attn2) -- (yp3);

\draw[->, dashed] ($(x2.east)+(1mm,0)$) to[bend left=15] node[above]{reorder} ($(xp2.west)+(-1mm,0)$);
\draw[->, dashed] ($(y2.east)+(1mm,0)$) to[bend left=-15] node[below]{same reorder} ($(yp2.west)+(-1mm,0)$);
\node[above=1mm of x2] {$\mathrm{Attn}(\pi X)=\pi\,\mathrm{Attn}(X)$};
\end{tikzpicture}
\caption{Permutation equivariance holds only without positions/masks; outputs permute with inputs.}
\label{fig:perm-eq}
\end{figure*}
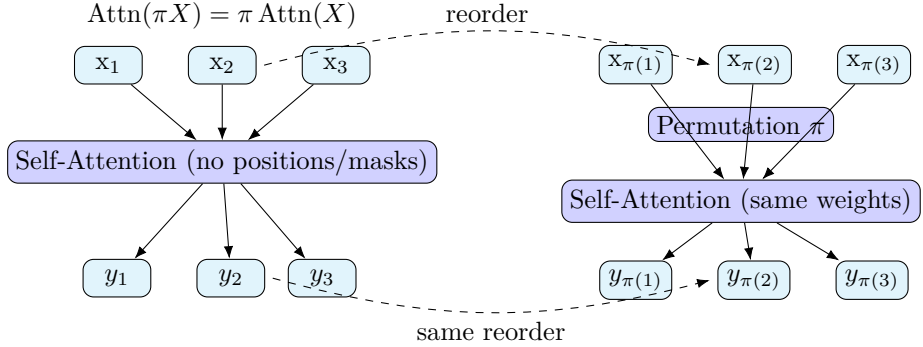

\subsection{Positional structure: sinusoidal, relative, and RoPE}
\label{sec:method-positional}
\paragraph{Absolute sinusoidals.}
A global shift by $\Delta$ induces per-frequency planar rotations on coordinates (block $2{\times}2$ $SO(2)$), stabilizing relative dot-products but \emph{not} enforcing full translation-equivariance.

\paragraph{Relative encodings and RoPE.}
RoPE rotates $Q,K$ so dot-products depend on \emph{phase differences}, controlling relative structure inside attention.

\begin{figure}[th]
\centering
\begin{tikzpicture}[>=Latex, scale=1]
\draw[->] (-2.5,0) -- (2.5,0) node[right] {$q_{2k}$};
\draw[->] (0,-0.2) -- (0,2.2) node[above] {$q_{2k+1}$};
\draw[->, thick] (0,0) -- (1.5,0.6) node[midway, above] {$q$};
\draw[->, thick, color=red] (0,0) -- ({1.5*cos(35) - 0.6*sin(35)},{1.5*sin(35)+0.6*cos(35)})
 node[near end, right, color=red] {$R_\theta q$};
\draw (1.1,0.44) arc[start angle=21, end angle=56, radius=1.2];
\node at (0.9,0.95) {$\theta$};
\end{tikzpicture}
\caption{RoPE applies blockwise planar rotations to query/key coordinates per frequency; dot-products depend on relative phase.}
\label{fig:rope-rotation}
\end{figure}
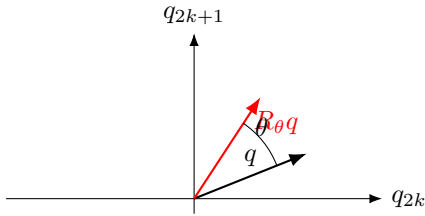

\paragraph{Diagnostic D2 (RoPE drift).}
Apply small phase offsets $\delta$ and track the drift of attention-score distributions vs.\ depth/context length; summarize by area-under-drift (plots in \S\ref{sec:diagnostics}).

\subsection{Rotations and O(d) gauges in representation space}
\label{sec:method-gauge}
\paragraph{Orthogonal gauge and invariants.}
Hidden states are identifiable up to $R\!\in\!O(d)$; report gauge-invariant summaries (norms, pairwise cosines, principal angles).

\paragraph{Gauge-fixing pipeline.}
Whiten per layer and Procrustes-align across seeds; thereafter, log only gauge-invariant features.

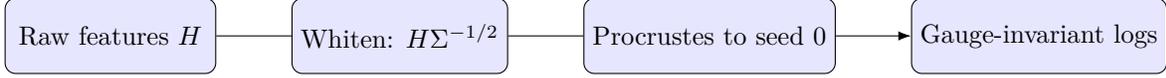
\begin{figure*}[t]
\centering
\begin{tikzpicture}[node distance=5mm, >=Latex]
\tikzstyle{blk}=[draw, rounded corners, minimum width=28mm, minimum height=10mm, fill=blue!10]
\node[blk] (raw) {Raw features $H$};
\node[blk, right=10mm of raw] (white) {Whiten: $H\Sigma^{-1/2}$};
\node[blk, right=10mm of white] (proc) {Procrustes to seed 0};
\node[blk, right=10mm of proc] (log) {Gauge-invariant logs};

\draw[->] (raw) -- (white) -- (proc) -- (log);
\end{tikzpicture}
\caption{Gauge-fixing for reproducible analysis: whiten then align, log gauge-invariant quantities.}
\label{fig:gauge-pipeline}
\end{figure*}

\subsection{Parameter-space symmetries}
\label{sec:method-paramsym}
\paragraph{MLP permutations.}
Compensated hidden-unit permutations preserve $f(x)$ in the idealized setting; we use them as sanity checks for gauge-invariant logging.

\begin{figure}[th]
\centering
\begin{tikzpicture}[node distance=8mm, >=Latex]
\tikzstyle{mat}=[draw, rounded corners, fill=blue!6, inner sep=3pt]
\tikzstyle{arr}=[-Latex, thick]

\node[mat] (x) {$x$};
\node[mat, right=10mm of x] (W1) {$W_1$};
\node[mat, right=10mm of W1] (phi) {$\phi$};
\node[mat, right=10mm of phi] (W2) {$W_2$};
\node[mat, right=10mm of W2] (y) {$y$};
\draw[arr] (x) -- (W1) -- (phi) -- (W2) -- (y);

\node[mat, below=8mm of W1] (PW1) {$P W_1$};
\node[mat, right=10mm of PW1] (phi2) {$\phi$};
\node[mat, right=10mm of phi2] (W2P) {$W_2 P^{-1}$};

\draw[arr] ($(x.south)+(0,-8mm)$) -- (PW1) -- (phi2) -- (W2P) -- ($(y.south)+(0,-8mm)$);

\node at ($(phi)!0.5!(W2)$) [above=4mm] {$\;f(x)=W_2\phi(W_1x)$};
\node at ($(phi2)!0.5!(W2P)$) [below=4mm] {$\;f'(x)=(W_2P^{-1})\phi(PW_1x)=f(x)$};

\end{tikzpicture}
\caption{Permuting hidden units and compensating in the next layer preserves function.}
\label{fig:mlp-perm}
\end{figure}
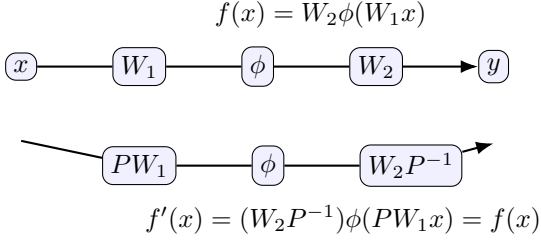

\paragraph{Multi-head mixing.}
Block-invertible head mixing can be absorbed by $W_O$ (approximate invariance). We use compensated transforms as a diagnostic; visual results in \S\ref{sec:diagnostics}.

\subsection{Task-level invariances and the groupoid view}
\label{sec:method-taskinv}
\paragraph{Groupoid framing.}
Semantics-preserving transforms (e.g., $\alpha$-renaming, algebraic rewrites) compose partially and may be non-invertible; we evaluate invariance over \emph{orbits}.

\begin{figure}[th]
\centering
\begin{subcaptionblock}{0.45\linewidth}
\begin{lstlisting}[language=Python,basicstyle=\ttfamily\small]
def add(a, b):
    return a + b
x = add(3, 5)
\end{lstlisting}
\caption{Original}
\end{subcaptionblock}\hfill
\begin{subcaptionblock}{0.45\linewidth}
\begin{lstlisting}[language=Python,basicstyle=\ttfamily\small]
def add(u, v):
    return u + v
y = add(3, 5)
\end{lstlisting}
\caption{Renamed}
\end{subcaptionblock}
\caption{Alpha-renaming orbit for code: predictions (e.g., pass@k) should be invariant.}
\label{fig:alpha-rename}
\end{figure}

\paragraph{Orbit construction and D5.}
Construct orbits and measure the \emph{invariance ratio} (IR) per input; log a failure taxonomy (predictive use in \S\ref{sec:diagnostics}).

\subsection{Commutators and inverse-free curvature (definitions only)}
\label{sec:method-comm-fiber}
\paragraph{Activation commutator (order sensitivity).}
For submodules $A,B$ acting on a calibration batch $X$,
\[
\Delta_{A,B}(X)=\|A(B(X))-B(A(X))\|_F,
\]
used as a map of order-sensitive pairs (see \S\ref{sec:diagnostics}).

\paragraph{Inverse-free curvature (holonomy surrogate).}
To avoid ill-posed inverses from LN/softmax/MLP, we compare two JVP paths that end in the same fiber:
\begin{equation}
\label{eq:invfree-kappa}
\kappa_{\mathrm{inv}}(i,j,\ell)^2
:= \mathbb{E}_{v}\,\big\|\,T^{\mathrm{layer}}_{i,\ell}T^{\mathrm{attn}}_{i\leftarrow j,\ell}v
- T^{\mathrm{attn}}_{i\leftarrow j,\ell+1}T^{\mathrm{layer}}_{j,\ell}v\,\big\|_2^2.
\end{equation}
The Frobenius norm used in Eq.~\ref{eq:invfree-kappa} is preserved under orthogonal 
coordinate changes. Computation, sparsity, and visualizations

\noindent\textbf{Defaults.} Unless stated otherwise, we use the settings in Table~\ref{tab:exp-setup}: probes $r{=}6$, targets/layer $k{=}8$, neighbors $m{=}6$, commutator threshold $\tau_\Delta{=}0.10$, curvature threshold $\tau_\kappa{=}0.12$, and orbit tolerance $\delta{=}0.02$.

\subsection{Cost, sparsity, and sampling}
\label{sec:method-cost}
Let $d$ be width, $L$ layers, $T$ tokens, $r$ Hutchinson probes, $k$ sampled targets/layer, and $m$ neighbors/target. Matrix-free JVPs yield
\[
\text{work}\;\approx\;\mathcal{O}\!\big(r\cdot L\cdot k\cdot m\cdot \mathrm{cost}_{\mathrm{JVP}}\big).
\]
We (i) sample $k$ tokens/layer (uniform or by saliency), (ii) keep top-$m$ neighbors by attention mass, (iii) \emph{scan} with frozen-softmax transports and \emph{confirm} hotspots with full JVPs, and (iv) batch $r$ probes via vmap. Defaults: $r{=}6$, $k{=}8$, $m{=}6$.
\section{Geometry-based Diagnostics}
\label{sec:diagnostics}

This section operationalizes the signals introduced in \S\ref{sec:method}: (i) \emph{commutators} that surface order-sensitive module pairs and (ii) \emph{inverse-free curvature} that localizes non-commuting transports on the $(\text{position},\text{layer})$ grid. Unless stated, we use the defaults in Table~\ref{tab:exp-setup} (probes $r{=}6$, targets/layer $k{=}8$, neighbors $m{=}6$, thresholds $\tau_\Delta{=}0.10$, $\tau_\kappa{=}0.12$, orbit tolerance $\delta{=}0.02$).

\subsection{Commutators as order-sensitivity diagnostics}
\label{sec:commutators}
\paragraph{Definition and intent.}
For submodules $A,B$ acting on a calibration batch $X$, define
\[
\Delta_{A,B} \;=\; \big\|\,A\!\circ\!B(X)\;-\;B\!\circ\!A(X)\,\big\|_F,
\]
a model-agnostic indicator of \emph{order sensitivity}. Large $\Delta_{A,B}$ suggests reordering/fusing $A,B$ risks output drift, while small $\Delta_{A,B}$ flags candidates for safe fusion or parallel execution.

\paragraph{Computation recipe.}
Choose a module granularity (e.g., attention heads within a layer, or whole sublayers). Evaluate $A(B(X))$ and $B(A(X))$ on the same $X$ (match seeds and dropout state if applicable) and aggregate per pair $(A,B)$ with $\|\cdot\|_F$. Populate a commutator matrix and visualize as a heatmap (Fig.~\ref{fig:comm-heatmap}).

\paragraph{From signal to action.}
Sort pairs by $\Delta_{A,B}$; above a guard $\tau_\Delta$ run sequentially (and log extra verifiers), below $\tau_\Delta$ allow fusion/reorder/parallelization. For intervention trials, define \emph{output drift} $\delta=\|y_{AB}-y_{BA}\|$ under a safe reorder/fuse and report $\rho(\Delta,\delta)$ (trend illustrated in \figref{fig:diagnostic-suite}(b), see \S\ref{sec:metrics}).

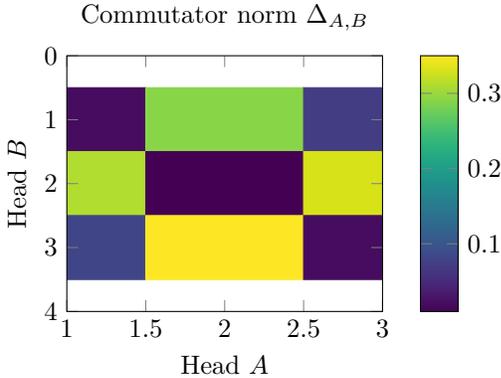
\begin{figure}[th]
\centering
\begin{tikzpicture}
\begin{axis}[
  width=0.7\linewidth,
  view={0}{90},
  colorbar,
  xlabel=Head $A$,
  ylabel=Head $B$,
  title={Commutator norm $\Delta_{A,B}$},
  colormap/viridis,
  enlargelimits=false,
]
\addplot[matrix plot, mesh/rows=3, mesh/cols=3, point meta=explicit] table[meta=z] {
x y z
1 1 0.02
1 2 0.31
1 3 0.08
2 1 0.29
2 2 0.01
2 3 0.35
3 1 0.07
3 2 0.33
3 3 0.02
};
\end{axis}
\end{tikzpicture}
\caption{Heatmap of $\Delta_{A,B}=\|A(B(X))-B(A(X))\|_F$ across attention heads (example data).}
\label{fig:comm-heatmap}
\vspace{-.1in}
\end{figure}

\paragraph{Artifacts.}
Emit \texttt{commutator.csv} with columns \texttt{module\_A, module\_B, delta\_fro}, and a static heatmap for CI. Downstream systems consume $\Delta_{A,B}$ and the guard $\tau_\Delta$.

\subsection{Fiber-bundle perspective: discrete transports and holonomy}
\label{sec:fiber}
\paragraph{Set-up.}
Let the base be $B=\text{Positions}\times\text{Layers}$; each node $(i,\ell)$ carries a fiber $\mathbb{R}^d$. Define \emph{vertical} transports $T^{\text{layer}}_{i,\ell}\approx \partial h_{i,\ell+1}/\partial h_{i,\ell}$ and \emph{horizontal} transports $T^{\text{attn}}_{i\leftarrow j,\ell}\approx \partial h^{\text{out}}_{i,\ell}/\partial h^{\text{in}}_{j,\ell}$ (edge-wise; sparse neighbors). Curvature summarizes the non-commutativity of these transports around small loops; \emph{flat} regions are reordering-safe while \emph{hotspots} are risk zones.

\begin{figure}[t]
\centering
\begin{tikzpicture}[>=Latex, x=\DX cm, y=\DY cm]
  \foreach \i in {1,...,\Npos} {
    \foreach \l in {0,...,\numexpr\Nlayers-1\relax} {
      \filldraw[black] (\i,\l) circle (\DotSize);
    }
  }
  \foreach \l in {0,...,\numexpr\Nlayers-1\relax} {
    \node[anchor=east] at (0.65,\l) {Layer \l};
  }
  \foreach \i in {1,...,\Npos} {
    \node[anchor=north] at (\i,-0.35) {Pos \i};
  }
  \foreach \i in {1,...,\Npos} {
    \foreach \l in {0,...,\numexpr\Nlayers-2\relax} {
      \draw[->] (\i,\l) -- (\i,\l+1);
    }
  }
  \newcommand{\AttnArrow}[3]{%
    \pgfmathsetmacro{\j}{#1} \pgfmathsetmacro{\ell}{#2} \pgfmathsetmacro{\ii}{#3}
    \path let \p1=(\j,\ell), \p2=(\ii,\ell) in
      coordinate (M) at ($(\p1)!.5!(\p2)$);
    \draw[->,blue!70]
      (\j,\ell) .. controls ($(M)+(0,\AttnLift)$) and ($(M)+(0,\AttnLift)$) .. (\ii,\ell);
  }
  \AttnArrow{1}{2}{4}
  \AttnArrow{4}{3}{2}
  \AttnArrow{5}{1}{3}
  \draw[red,very thick,rounded corners]
    (\LoopIStart,\LoopLStart) --
    (\LoopIEnd,\LoopLStart) --
    (\LoopIEnd,\LoopLEnd) --
    (\LoopIStart,\LoopLEnd) -- cycle;
  \node[red] at ({(\LoopIStart+\LoopIEnd)/2},{(\LoopLStart+\LoopLEnd)/2}) {$\mathcal{H}$};
\end{tikzpicture}
\caption{Product base \(B=\text{Positions}\times\text{Layers}\). Vertical edges are layer transports \(T^{\mathrm{layer}}_{i,\ell}\); blue curves indicate sample attention transports \(T^{\mathrm{attn}}_{i\leftarrow j,\ell}\). The red rectangle marks a Wilson loop \(\mathcal{H}\) used to measure curvature (order sensitivity).}
\label{fig:fiber-grid}
\vspace{-.1in}
\end{figure}

\paragraph{Curvature maps.}
Aggregate per-loop curvature (below) into position/layer maps (\figref{fig:holonomy-scatter}) and summarize predictiveness by ROC (\figref{fig:diagnostic-suite}(a)); see \S\ref{sec:metrics}.

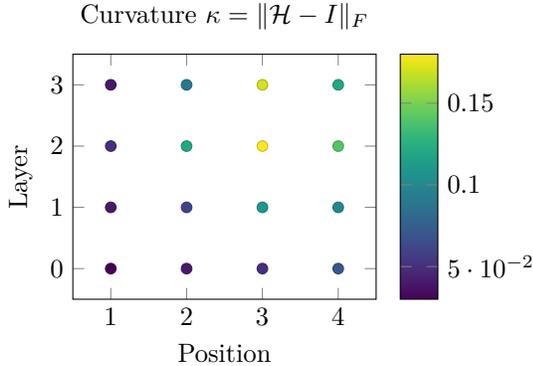
\begin{figure}[th]
\centering
\begin{tikzpicture}
\begin{axis}[
  width=0.68\linewidth,
  colorbar,
  xlabel=Position,
  ylabel=Layer,
  title={Curvature $\kappa=\|\mathcal{H}-I\|_F$},
  colormap/viridis,
  xmin=0.5, xmax=4.5, ymin=-0.5, ymax=3.5,
]
\addplot[
  scatter,
  only marks,
  scatter src=explicit,
  mark=*,
] table[
  row sep=\\,
  x=x, y=y, meta=z,
] {
x y z\\
1 0 0.03\\
2 0 0.04\\
3 0 0.05\\
4 0 0.07\\
1 1 0.04\\
2 1 0.06\\
3 1 0.11\\
4 1 0.10\\
1 2 0.05\\
2 2 0.12\\
3 2 0.18\\
4 2 0.14\\
1 3 0.04\\
2 3 0.09\\
3 3 0.17\\
4 3 0.12\\
};
\end{axis}
\end{tikzpicture}
\caption{Pointwise curvature over (position, layer); color encodes $\kappa$.}
\label{fig:holonomy-scatter}
\vspace{-.1in}
\end{figure}

\subsubsection{Discrete transports: implementation and what curvature measures}
\label{sec:transports-impl}
\paragraph{Vertical maps.}
In pre-LN blocks,
$T^{\text{layer}}_{i,\ell}\!\approx\! I + J^{\text{Attn}}_{i,\ell} + J^{\text{MLP}}_{i,\ell}$,
so residual connections make vertical transport close to identity; curvature therefore captures the \emph{accumulated deviation from identity} around a loop.

\paragraph{Horizontal maps.}
For each candidate edge $(j{\to}i,\ell)$, estimate $T^{\text{attn}}_{i\leftarrow j,\ell}$ by (a) a fast \emph{frozen-softmax} scan using per-head attention weights and $W_V,W_O$, and (b) confirm hotspots with full JVPs. We keep the top-$m$ neighbors by attention mass (with random exploration).

\subsubsection{Inverse-free curvature}
\label{sec:invfree}
\paragraph{Why inverse-free.}
LayerNorm, softmax, and MLPs are not globally invertible, so we avoid $(\cdot)^{-1}$ by comparing two paths that end in the same fiber $(i,\ell{+}1)$:
\[
\kappa_{\mathrm{inv}}(i,j,\ell)^2
= \mathbb{E}_{v}\,\big\|\,T^{\mathrm{layer}}_{i,\ell}T^{\mathrm{attn}}_{i\leftarrow j,\ell}v
- T^{\mathrm{attn}}_{i\leftarrow j,\ell+1}T^{\mathrm{layer}}_{j,\ell}v\,\big\|_2^2,
\]
as defined in Eq.~\eqref{eq:invfree-kappa}. We estimate the expectation with $r$ Rademacher probes via JVPs.

\paragraph{Loop selection and aggregation.}
Sample $k$ target tokens per layer and top-$m$ neighbors to form loops. Aggregate per input with either the maximum $\kappa_{\max}$ or the 95th percentile $\operatorname{p95}(\{\kappa\})$ to obtain a scalar failure predictor; see \S\ref{sec:metrics} and Fig.~\ref{fig:diagnostic-suite}(a).


\subsubsection{Cost, sparsity, and sampling}
\label{sec:cost-sampling}
\paragraph{Work model.}
With width $d$, layers $L$, tokens $T$, probes $r$, targets/layer $k$, and neighbors $m$,
\[
\text{work}\;\approx\;\mathcal{O}\!\big(r\,L\,k\,m\cdot \mathrm{cost}_{\mathrm{JVP}}\big).
\]
We (i) sample $k$ tokens/layer (uniform or saliency-biased), (ii) keep top-$m$ neighbors, (iii) scan with frozen transports and (iv) confirm hotspots with four JVPs/loop. Defaults: $r{=}6$, $k{=}8$, $m{=}6$.

\subsubsection{Notes and baselines}
\label{sec:notes}
\paragraph{Locality.}
“Locally flat’’ refers to $1{\times}1$ rectangles; $2{\times}2$ loops appear in ablations (rankings typically stable, magnitudes amplified).

\paragraph{Vertical composition.}
Our $T^{\text{layer}}$ already includes residual and sublayers; a two-edge vertical ablation (residual-only then sublayer-only) produces similar hotspot rankings.

\paragraph{Nulls.}
We report curvature for random-initialized models and for shuffled-attention baselines to calibrate magnitudes and false-alarm rates.

\subsection{Gauge fixing for interpretability and CI}
\label{sec:gaugefix}
\paragraph{When and why.}
We apply whitening and Procrustes \emph{for logging and cross-seed comparability}, not when computing $\kappa_{\mathrm{inv}}$. Metrics include probe-accuracy variance across seeds and saliency-rank stability (Kendall-$\tau$). The pipeline (\figref{fig:gauge-pipeline}) standardizes features so CI dashboards are reproducible.

\subsection{Representation tendencies for linguistic transforms}
\label{sec:ling}
\paragraph{Offsets, not theorems.}
Across languages, morphological/syntactic relations often appear as approximately linear offsets in embedding subspaces; this is an empirical tendency, not a claim of an explicit grammar-group representation. We evaluate offset consistency under orthogonal alignment and compare locations of offset-sensitive examples with commutator/curvature hotspots (\figref{fig:diagnostic-suite}(b)).

\medskip
\noindent\textbf{Artifacts and usage.}
This section yields: \texttt{commutator.csv} (pairwise $\Delta_{A,B}$), \texttt{holonomy.csv} (per-loop $\kappa_{\mathrm{inv}}$ with indices $(i,j,\ell)$), and static plots for CI (\figref{fig:comm-heatmap}, \figref{fig:fiber-grid}, \figref{fig:holonomy-scatter},~\figref{fig:diagnostic-suite}(a),~\figref{fig:diagnostic-suite}(b)). Systems consume these along with guard thresholds $(\tau_\Delta,\tau_\kappa)$ to decide when to fuse, reorder, or parallelize, and when to enforce extra verification.
\section{Evaluation Metrics}
\label{sec:metrics}

\newcommand{\subcapdrop}{8mm}
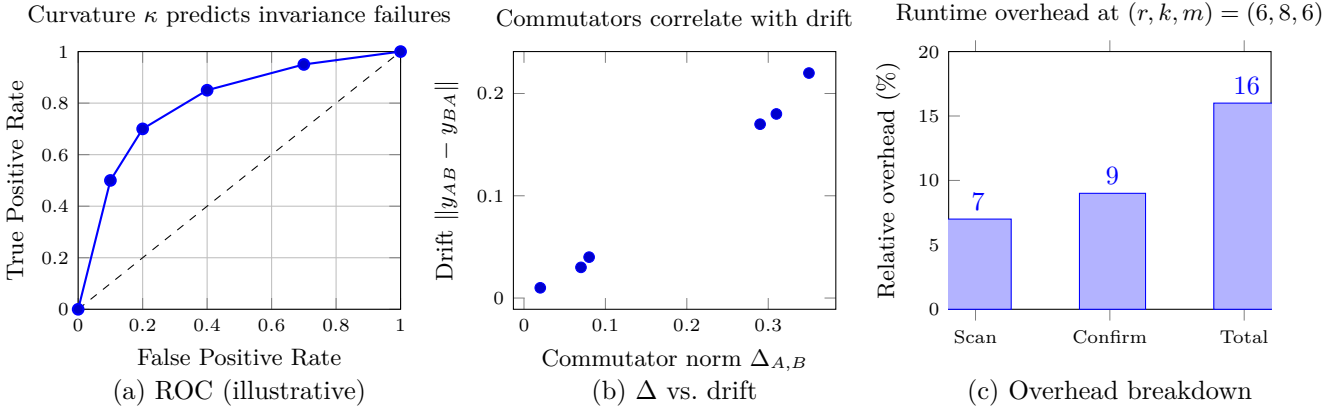
\begin{figure*}[t]
\centering
\begin{tikzpicture}
\begin{groupplot}[
  group style={group size=3 by 1, horizontal sep=15mm},
  scale only axis=true,
  width=0.25\textwidth,
  height=0.20\textwidth,
  title style={font=\footnotesize},
  label style={font=\footnotesize},
  tick label style={font=\scriptsize}
]

\nextgroupplot[
  xlabel=False Positive Rate,
  ylabel=True Positive Rate,
  title={Curvature $\kappa$ predicts invariance failures},
  xmin=0, xmax=1, ymin=0, ymax=1, grid=both
]
\addplot+[thick] coordinates {(0,0) (0.1,0.5) (0.2,0.7) (0.4,0.85) (0.7,0.95) (1,1)};
\addplot[dashed] coordinates {(0,0) (1,1)};
\label{fig:roc-curvature}

\nextgroupplot[
  xlabel={Commutator norm $\Delta_{A,B}$},
  ylabel={Drift $\|y_{AB}-y_{BA}\|$},
  title={Commutators correlate with drift},
]
\addplot+[only marks] coordinates {
(0.02,0.01) (0.08,0.04) (0.31,0.18) (0.35,0.22) (0.07,0.03) (0.29,0.17)
};
\label{fig:comm-drift}

\nextgroupplot[
  ybar,
  ylabel={Relative overhead (\%)},
  symbolic x coords={Scan,Confirm,Total},
  xtick=data,
  nodes near coords,
  ymin=0, ymax=20,
  title={Runtime overhead at $(r,k,m)=(6,8,6)$},
  bar width=25pt, bar shift=1pt
]
\addplot coordinates {(Scan,7) (Confirm,9) (Total,16)};
\label{fig:overhead-bars}

\end{groupplot}

\node[anchor=north] at ($(group c1r1.south)+(0,-\subcapdrop)$) {(a) ROC (illustrative)};
\node[anchor=north] at ($(group c2r1.south)+(0,-\subcapdrop)$) {(b) $\Delta$ vs.\ drift};
\node[anchor=north] at ($(group c3r1.south)+(0,-\subcapdrop)$) {(c) Overhead breakdown};
\end{tikzpicture}
\vspace{-.3in}
\caption{Diagnostic suite: (a) curvature ROC; (b) commutator–drift correlation; (c) scan/confirm/total overhead. Replace illustrative values with measured results.}
\label{fig:diagnostic-suite}
\vspace{-.1in}
\end{figure*}

We score the proposed diagnostics along six axes: strict invariance under clean transforms (IR), curvature–failure predictiveness (ROC/AP), output sensitivity to operator reordering (drift), reproducibility under gauge fixing, RoPE phase-drift stability, and runtime overhead. Each metric is defined at the \emph{per-input} level and as \emph{dataset aggregates}. Diagnostic/topology maps appear in \S\ref{sec:diagnostics}; this section defines how we turn those signals into reportable numbers. Unless stated otherwise, we use the defaults in Table~\ref{tab:exp-setup}.

\paragraph{Invariance ratio (E1; ties to Fig.~\ref{fig:alpha-rename}).}
For an input $x$ with an orbit $\mathcal{O}(x)$ (e.g., $\alpha$-renamings; Fig.~\ref{fig:alpha-rename}), define
\begin{footnotesize}
\[
\mathrm{IR}(x)=\frac{1}{|\mathcal{O}(x)|}\sum_{x'\in\mathcal{O}(x)} \mathbf{1}\{f(x')=f(x)\},
\]
\end{footnotesize}
where $f(\cdot)$ is a task decision (e.g., pass@k for code, argmax token for classification). Assign a failure label $y{=}1$ if $\mathrm{IR}(x) < 1 - \delta$, where $\delta$ is a tolerance calibrated on a benign validation set.  
\text{Aggregates:} macro-average
\begin{footnotesize}
\[
\overline{\mathrm{IR}}_{\text{macro}}=\frac{1}{N}\sum_{x} \mathrm{IR}(x),
\]
\[
\overline{\mathrm{IR}}_{\text{micro}}=\frac{\sum_{x}\sum_{x'\in\mathcal{O}(x)} \mathbf{1}\{f(x')=f(x)\}}{\sum_{x} |\mathcal{O}(x)|},
\]
\end{footnotesize}
with stratified bootstrap 95\% CIs.

\paragraph{Curvature summary (E5; ties to Figs.~\ref{fig:fiber-grid}, \ref{fig:holonomy-scatter}).}
Given loops $\mathcal{L}(x)$ associated with $x$, aggregate inverse-free curvature (\S\ref{sec:invfree}) as
\[
\kappa_{\max}(x)=\max_{(i,j,\ell)\in\mathcal{L}(x)} \kappa_{\mathrm{inv}}(i,j,\ell),
\]
\[
\kappa_{p95}(x)=\mathrm{quantile}_{0.95}\!\left(\{\kappa_{\mathrm{inv}}(i,j,\ell)\}_{(i,j,\ell)\in\mathcal{L}(x)}\right).
\]
We visualize position/layer curvature in \S\ref{sec:diagnostics} (Fig.~\ref{fig:holonomy-scatter}) and use these aggregates as predictors below.

\paragraph{Predictive power for invariance breaks (E6).}
Treat a curvature aggregate $s(x)\!\in\!\{\kappa_{\max}(x),\kappa_{p95}(x)\}$ as a score for predicting $y$. Sweep thresholds to obtain ROC and PR curves; report ROC AUC, Average Precision (AP), Brier score, and Expected Calibration Error (ECE; 10 bins). Use stratified bootstrap for 95\% CIs. Include null baselines: (i) input-wise random permutation of curvature maps; (ii) random-initialized networks matched by width/depth.

\paragraph{Order sensitivity and drift (E3; ties to Figs.~\ref{fig:comm-heatmap}, \ref{fig:diagnostic-suite}(b)).}
Compute the commutator norm $\Delta_{A,B}=\|A\!\circ\!B - B\!\circ\!A\|_F$ on a calibration batch and measure \emph{output drift} under safe reorder/fuse, $\delta(x)=\|y_{AB}-y_{BA}\|$.  
\textbf{Aggregates:} report Spearman $\rho(\Delta,\delta)$, Pearson $r$, and a robust slope via Theil–Sen, all with bootstrap CIs. When relevant, report partial correlations controlling for sequence length or output entropy.


\paragraph{Gauge-stable logging (E2/E7; ties to Fig.~\ref{fig:gauge-pipeline}).}
Across random seeds and identical data slices, compare: (i) probe-accuracy variance (lower is better), (ii) saliency-rank stability via Kendall-$\tau$ (higher is better), and (iii) cross-seed cosine distance of layerwise means (lower is better). Report pre/post gauge-fix (whiten\,$+$\,Procrustes) deltas with CIs. In CI, set acceptance thresholds (e.g., variance ratio $\le 0.6$) that fail builds when reproducibility regresses.

\paragraph{RoPE phase-drift stability (E4; ties to Fig.~\ref{fig:rope-rotation}).}
Apply controlled phase offsets $\delta\theta$ to RoPE frequencies; measure Wasserstein (or symmetric KL) between attention-score distributions at depth $\ell$ \emph{with} vs.\ \emph{without} the offset. Summarize by area-under-drift vs.\ depth/context length; larger areas indicate long-context brittleness.

\paragraph{Runtime overhead (all experiments).}
Measure end-to-end overhead of diagnostics at defaults $(r,k,m)$ and at reduced settings. Attribute cost to scan vs.\ confirm (frozen-softmax multiplies vs.\ JVPs). Target $\le 10$–$20\%$ overhead at defaults; additionally report throughput gains unlocked by curvature/commutator-informed fusion (\S\ref{sec:diagnostics}).

\vspace{1ex}
\noindent\textbf{Reporting checklist.} Always include: $N$ inputs, loops/input, seeds, model sizes; tolerances $\delta$; bootstrap details (stratification, reps); nulls; confidence intervals; ablation knobs (loop size, gauge-fix mode, transport mode); and wall-clock environment (GPU/TPU type, batch size).

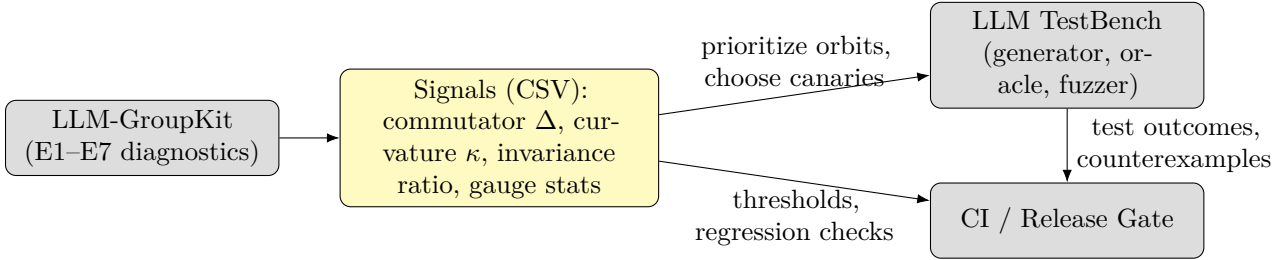
\begin{figure*}[th!]
\centering
\begin{tikzpicture}[node distance=10mm, >=Latex]
\tikzstyle{blk}=[draw, rounded corners, fill=gray!27,
                 minimum height=10mm, text width=34mm, align=center]
\tikzstyle{sig}=[draw, rounded corners, fill=yellow!30,
                 minimum height=10mm, text width=40mm, align=center]

\node[blk] (groupkit) {LLM-GroupKit \\ (E1--E7 diagnostics)};
\node[sig, right=8mm of groupkit] (signals)
  {Signals (CSV):\\ commutator $\Delta$, curvature $\kappa$, invariance ratio, gauge stats};
\node[blk, right=36mm of signals, yshift=11mm] (tb) {LLM TestBench \\ (generator, oracle, fuzzer)};
\node[blk, right=36mm of signals, yshift=-11mm] (ci) {CI / Release Gate};

\draw[->] (groupkit) -- (signals);
\draw[->] (signals) -- (tb)
  node[midway, above, align=center, text width=32mm]
  {prioritize orbits,\\choose canaries};
\draw[->] (signals) -- (ci)
  node[midway, below, align=center, text width=36mm]
  {thresholds,\\regression checks};
\draw[->] (tb) -- (ci)
  node[midway, right, align=center, text width=26mm]
  {test outcomes,\\counterexamples};

\end{tikzpicture}
\caption{Diagnostics feed an LLM-driven TestBench and CI gate.}
\label{fig:systems-wiring}
\vspace{-.1in}
\end{figure*}

\vspace{-.1in}
\section{Systems Implications}
\label{sec:systems}

Our diagnostics expose \emph{geometry-derived signals} that an \textbf{LLM-driven TestBench} can consume to generate, prioritize, and evaluate tests. Figure~\ref{fig:systems-wiring} shows the wiring: \textbf{LLM-GroupKit} (E1--E7) produces signals; a \textbf{TestBench} uses them to steer metamorphic testing (e.g., $\alpha$-renaming orbits), reordering canaries, and fuzzing; a \textbf{CI/Release Gate} enforces thresholds.

\vspace{-.06in}
\subsection{Signal interfaces and data contracts}
\label{sec:systems-interfaces}
Diagnostics emit append-only, schema-versioned CSVs designed to be \emph{gauge-stable}. Features are logged \emph{after} gauge-fixing (whiten+Procrustes), except raw $\kappa_{\mathrm{inv}}$ which uses orthogonal projections.
\begin{itemize}[leftmargin=1.2em,itemsep=-0.5em,topsep=-0.5em]
\item \textbf{holonomy.csv}: \texttt{position:int, layer:int, kappa:float, model:str, seed:int, ts:iso8601, schema:int}
\item \textbf{commutator.csv}: \texttt{i:int, j:int, value:float, block:str, model:str, seed:int, ts:iso8601, schema:int}
\item \textbf{ir.csv}: \texttt{input\_id:str, IR:float, tol:float, label:int, model:str, seed:int, ts:iso8601, schema:int}
\item \textbf{gauge\_stats.csv}: \texttt{layer:int, seed:int, kendall\_tau:float, probe\_var:float, model:str, ts:iso8601, schema:int}
\end{itemize}
Artifacts include model hash, seeds, and timestamps for forensic replay.

\vspace{-.06in}
\subsection{From signals to actions: TestBench policies}
\label{sec:systems-policy}
We map signals to conservative defaults (Table~\ref{tab:signal-actions-tb}); the TestBench can override per workload.

\begin{table*}[ht]
\centering
\caption{Signal-to-action mapping for an LLM-driven TestBench and CI.}
\label{tab:signal-actions-tb}
\small
\begin{tabular}{@{}lll@{}}
\toprule
Signal & Consumer & Default action \\
\midrule
Low $\Delta_{A,B}$ & TestBench & Generate/favor fuse+reorder canaries; expect low drift \\
High $\Delta_{A,B}$ & TestBench & Mark reorder-unsafe; synthesize AB/BA counterexamples \\
Low $\kappa_{\max/p95}$ & TestBench & Run orbits in parallel; increase fuzz budget \\
High $\kappa_{\max/p95}$ & TestBench & Sequentialize orbits; add stronger oracles/checkers \\
IR failures & TestBench & Escalate severity; auto-generate minimal counterexample \\
Gauge-stable $\downarrow$ variance & CI/Gate & Accept build; freeze probe config \\
\bottomrule
\end{tabular}
\end{table*}

\paragraph{Example: curvature-aware test generation.}
\begin{lstlisting}[
  language=Python,
  basicstyle=\ttfamily\footnotesize,
  breaklines=true,
  breakatwhitespace=true,
  columns=fullflexible,
  keepspaces=true,
  postbreak=\mbox{\textcolor{gray}{$\hookrightarrow$}\space},
  caption={Curvature-guided test planning with an LLM generator and oracle.},
]
def plan_tests(inputs, kappa_map, delta_map, tau_kappa=0.12, tau_delta=0.10):
    tests = []
    for x in inputs:
        # prioritize positions/layers with high curvature
        hot = select_hotspots(kappa_map[x], quantile=0.95, thr=tau_kappa)
        # metamorphic orbits: alpha-rename, algebraic rewrites, paraphrases
        orbits = llm_generate_orbits(x, hotspots=hot, budget="adaptive")
        # decide whether to include reorder/fuse canaries
        if mean_delta(delta_map[x]) <= tau_delta:
            tests.append(make_fuse_reorder_canaries(x))
        tests.extend(orbits)
    return tests
\end{lstlisting}

\vspace{-.1in}
\subsection{Deployment patterns}
\label{sec:systems-runtime}
Pick a pattern by latency budget:
\begin{enumerate}[leftmargin=1.2em,itemsep=-0.5em,topsep=-0.5em]
\item \textbf{Sidecar (recommended):} a co-process computes signals and enqueues prioritized tests; the TestBench drains with a 200--500\,ms window.
\item \textbf{In-service (low latency):} inline light scans (\emph{frozen-softmax}); rate-limit heavy JVP confirmation.
\item \textbf{Batch (offline):} nightly full maps for dashboards; thresholds updated via config.
\end{enumerate}
Use Fig.~\ref{fig:diagnostic-suite}(c) (\S\ref{sec:metrics}) to budget; keep \texttt{Total} $\le 20\%$.

\vspace{-.06in}
\subsection{Observability and CI gates}
\label{sec:systems-ci}
Dashboards track ROC AUC/AP (\S\ref{sec:metrics}), $\rho(\Delta,\delta)$, variance/rank stability, and overhead. CI fails on regressions (e.g., variance ratio $>0.6$ post gauge-fix, AUC drop $>0.03$). Each build snapshots model hash, seeds, knob settings, and schema versions.

\vspace{-.06in}
\subsection{Failure modes and mitigations}
\label{sec:systems-failure}
\textbf{Stale signals:} expire maps after $T$ minutes or model-hash change.  
\textbf{Threshold drift:} recalibrate (Platt/Isotonic) on fresh labels.  
\textbf{Nondeterminism:} fix seeds; log RNG states; align via Procrustes before comparisons.  
\textbf{Long-range effects:} enlarge loop radius if IR failures appear off-hotspot.

\vspace{-.1in}
\subsection{Knobs and defaults}
\label{sec:systems-knobs}
Expose: probes $r{=}6$, targets/layer $k{=}8$, neighbors $m{=}6$, $\tau_{\Delta}{=}0.10$, $\tau_{\kappa}{=}0.12$, tolerance $\delta{=}0.02$ (see also Table~\ref{tab:exp-setup}); sweep in ablations.

\vspace{-.06in}
\subsection{End-to-end flows}
\label{sec:systems-flows}
\textbf{Test-first release:} compute signals $\rightarrow$ generate curvature/commutator-guided tests $\rightarrow$ run orbits+canaries $\rightarrow$ gate on IR/ROC/overhead thresholds.  
\textbf{Reactive repair:} on failure, auto-synthesize minimal counterexample, suggest patch, and re-run targeted orbits; persist gauge-fixed traces for audit.

\paragraph{Takeaway.} Curvature and commutator signals give an \emph{LLM-first testing substrate}: they prioritize which tests to run, where to search for counterexamples, and when a build is safe to ship.

\vspace{-.06in}
\subsection{High-confidence applications for LLM systems}
\label{sec:llm-applications}

Beyond general deployment hygiene, our diagnostics enable specific reliability improvements in LLM production systems.

\paragraph{Prompt robustness testing.}
Measure $\kappa_{\mathrm{inv}}$ under semantics-preserving prompt variations such as paraphrases, template changes, or role reorderings. High curvature flags brittle prompts that need stabilization or explicit invariance constraints. \emph{Example:} customer-service prompts should maintain low $\kappa_{\mathrm{inv}}$ across rephrasings like “Please help me” versus “I need assistance.” Curvature spikes indicate training gaps.

\paragraph{RAG passage ordering sensitivity.}
When retrieval returns ranked documents \cite{lewis2020retrieval}, passage order can spuriously affect outputs due to position bias. Commutators $\Delta_{A,B}$ between passage positions quantify reordering risk. Systems can: (i) shuffle low-$\Delta$ passages freely for parallelism, (ii) preserve high-$\Delta$ ordering with sequential processing, and (iii) add order-invariance verification for critical queries.

\paragraph{Fine-tuning stability audits.}
Task-specific fine-tuning can degrade invariances. WILSON provides before and after curvature maps. Regions where $\kappa_{\mathrm{inv}}$ increases post-tuning flag potential regressions. CI gates reject runs that increase curvature beyond accepted thresholds on held-out metamorphic test suites.

\paragraph{Multi-turn conversation drift.}
Long conversations accumulate context \cite{PathAGIChang2024,SocraSynthChangCSCI2023,chang2025MACIDD}. Curvature maps identify layers and positions where context integration becomes order-sensitive. Systems can trigger summarization when $\kappa_{\mathrm{inv}}$ rises above a threshold, route high-$\kappa$ turns to sequential processing, and log drift for post-hoc debugging.

\paragraph{Chain-of-thought pathway stability.}
Different CoT \cite{WeiCoT2022} reasoning traces (e.g., ``step by step'' versus ``let us think'') should yield consistent answers. Commutators between chain-of-thought templates measure pathway sensitivity. High $\Delta$ indicates fragile reasoning that benefits from ensemble or self-consistency decoding.

\paragraph{Confidence estimates.}
Curvature serves as an uncertainty proxy. Predictions from high-$\kappa$ regions have higher variance under benign perturbations. Systems can surface low-confidence warnings to users, route uncertain queries to human review, and trigger additional verification for high-$\kappa$ outputs.

\paragraph{Integration pattern.}
Applications follow a common workflow:
\begin{enumerate}[leftmargin=1.2em,itemsep=-0.5em,topsep=-0.5em]
\item Compute curvature and commutator maps.
\item Set domain-specific thresholds via calibration.
\item Emit signals (as metrics) to the orchestration layer.
\item Take action: reject, retry, verify, or route differently.
\end{enumerate}

All applications require only the diagnostic signals described earlier in Section~4, with no model retraining or architectural changes.
\section{Experiments}
\label{sec:experiments}

\subsection{Scope and preregistration}
\label{sec:scope-prereg}
This paper reports \emph{no measured results}. We preregister a complete, executable evaluation plan for the diagnostics introduced in \S\ref{sec:method} and visualized in \S\ref{sec:diagnostics}, scored by the metrics in \S\ref{sec:metrics}. Unless stated otherwise, we use the defaults in Table~\ref{tab:exp-setup} for all diagnostics. Code and schemas follow the CSV contracts in \S\ref{sec:systems-interfaces}.

\begin{table*}[t!]
\caption{Consolidated experimental setup and diagnostics knobs used across all planned experiments.
Thresholds are calibrated on a validation set and fixed for test reporting.}
\label{tab:diag-setup}
\label{tab:exp-setup}
\centering
\setlength{\tabcolsep}{6pt}
\footnotesize
\begin{tabular}{@{}lll@{}}
\toprule
Category & Item & Value / Notes \\
\midrule
\multicolumn{3}{@{}l}{\emph{General setup}}\\
& Models & Llama-2/3, Mistral, Gemma (7B–13B) \\
& Precision & BF16/FP16; FP32 for LN/softmax JVPs \\
& Context length & 2k–8k tokens \\
& Tokenizer & model default \\
& Decoding (code) & pass@k; $k\in\{1,10\}$; max gen length 256 tokens \\
& Seeds & 3–5; deterministic where available \\
\midrule
\multicolumn{3}{@{}l}{\emph{Diagnostics knobs}}\\
& Probes $r$ & 6 \;\;(range 4–12) \\
& Targets/layer $k$ & 8 \;\;(range 4–16) \\
& Neighbors $m$ & 6 \;\;(range 4–12) \\
& Comm. thresh $\tau_\Delta$ & 0.10 \;\;(0.05–0.20) \\
& Curv. thresh $\tau_\kappa$ & 0.12 \;\;(0.06–0.25) \\
& Orbit tol $\delta$ & 0.02 \;\;(0.00–0.05) \\
\bottomrule
\end{tabular}
\end{table*}

\subsection{Experimental setup}
\label{sec:setup}
\paragraph{Models.}
Open decoder-only Transformers with RoPE/relative positions: small/medium checkpoints (e.g., 7B--13B); inference in BF16/FP16 with FP32 pins for LayerNorm/softmax JVPs. We fix tokenizer/version and freeze weights throughout.

\paragraph{Tasks and datasets.}
(i) \textbf{Code orbits} for invariance: HumanEval/MBPP-style prompts with semantics-preserving $\alpha$-renaming and algebraic rewrites (Fig.~\ref{fig:alpha-rename}). (ii) \textbf{RoPE stress}: synthetic sequences for controlled phase offsets (Fig.~\ref{fig:rope-rotation}). (iii) \textbf{Calibration batches} for commutators/reorder probes (random and task-like inputs). All prompts are deduplicated; orbits are verified to preserve semantics.

\paragraph{Hardware.}
A100-class GPUs (40--80\,GB), PyTorch (\texttt{torch.func} JVPs), cudnn deterministic kernels where available. Batch sizes and context lengths are recorded in artifacts.

\paragraph{Seeds and determinism.}
We fix global seeds for Python/NumPy/PyTorch; log RNG states; reuse identical input slices across seeds for cross-seed comparability. Gauge-fix (whiten+Procrustes) is applied only for logging, not when computing $\kappa_{\mathrm{inv}}$.

\paragraph{Diagnostic knobs (defaults).}
Hutchinson probes $r{=}6$, targets per layer $k{=}8$, neighbors per target $m{=}6$, commutator threshold $\tau_{\Delta}{=}0.10$, curvature threshold $\tau_{\kappa}{=}0.12$, orbit tolerance $\delta{=}0.02$; see Table~\ref{tab:exp-setup}. 

\subsection{Planned procedures}
\label{sec:procedures}

\paragraph{E1: Invariance under clean transforms.}
For each input $x$, generate an orbit $\mathcal{O}(x)$ of semantics-preserving transforms (identifier renaming, algebraic equivalence). Compute the decision function $f(\cdot)$ (pass@k for code, argmax for classification) and the invariance ratio $\mathrm{IR}(x)$ defined in \S\ref{sec:metrics}. Emit \texttt{ir.csv} with labels $y{=}1$ if $\mathrm{IR}(x) < 1-\delta$. 

\paragraph{E2: Gauge-stable logging.}
On matched seeds/slices, compute probes/saliency both \emph{pre}- and \emph{post}-gauge-fix (Fig.~\ref{fig:gauge-pipeline}). Log probe-accuracy variance and Kendall-$\tau$ rank agreements to \texttt{gauge\_stats.csv}. 

\paragraph{E3: Commutator maps and reorder drift.}
For selected submodule pairs $(A,B)$, compute $\Delta_{A,B}=\|A\!\circ\!B-B\!\circ\!A\|_F$ on a calibration batch (Fig.~\ref{fig:comm-heatmap}). Apply \emph{safe} reorder/fuse interventions (no change in semantics, numerically stable) and measure output drift $\delta=\|y_{AB}-y_{BA}\|$; report $\rho(\Delta,\delta)$ and $r$ as in \S\ref{sec:metrics} and plot trends as in Fig.~\ref{fig:diagnostic-suite}(b).

\paragraph{E4: RoPE phase-drift.}
Apply controlled phase offsets $\delta\theta$ to RoPE; compute per-layer distribution distances between perturbed and unperturbed attention scores and summarize area-under-drift vs.\ depth/context length (definition in \S\ref{sec:metrics}).

\paragraph{E5: Curvature maps (holonomy).}
Compute inverse-free curvature $\kappa_{\mathrm{inv}}$ via Eq.~\eqref{eq:invfree-kappa} using JVPs: \emph{scan} with frozen-softmax transports to propose edges; \emph{confirm} hotspots with full JVP loops. Emit per-position/layer maps (Fig.~\ref{fig:holonomy-scatter}) to \texttt{holonomy.csv}.

\paragraph{E6: Curvature $\rightarrow$ failure prediction.}
Aggregate loop scores per input $x$ (e.g., $\kappa_{\max}$, $\kappa_{p95}$ from \S\ref{sec:metrics}); use these as scores for predicting $y$ from E1. Produce ROC/PR curves and report AUC/AP/Brier/ECE with bootstrap CIs (illustrative ROC in Fig.~\ref{fig:diagnostic-suite}a).

\paragraph{E7: Overhead and budget.}
Measure wall-clock of \emph{scan}, \emph{confirm}, and total overhead at defaults and reduced settings; attribute cost (matrix multiplies vs.\ JVPs) and report as in Fig.~\ref{fig:diagnostic-suite}c. Target $\le$\,20\% at defaults.

\subsection{Baselines and ablations}
\label{sec:ablations}
\paragraph{Null baselines.}
(i) \textbf{Permuted curvature maps:} randomly permute $\kappa_{\mathrm{inv}}$ across inputs before scoring E6; (ii) \textbf{Random-init network:} compute the full pipeline on width/depth-matched randomly initialized models.

\paragraph{Comparative baselines.}
Heuristics without geometry: (i) attention-entropy thresholds, (ii) gradient-norm canaries, (iii) layerwise activation variance. Compare AUC/AP and calibration vs.\ $\kappa$-based scores.

\paragraph{Ablation knobs.}
(i) Loop size: $1{\times}1$ vs.\ $2{\times}2$ rectangles; (ii) Transport mode: frozen-softmax scan only vs.\ scan+JVP confirm; (iii) Sampling: $(r,k,m)$ sweeps; (iv) Gauge-fix: none vs.\ whiten-only vs.\ whiten+Procrustes (for logging stability, not for $\kappa_{\mathrm{inv}}$).

\subsection{Metrics and statistical protocol}
\label{sec:exp-metrics}
We use the formal definitions in \S\ref{sec:metrics}: IR (per-input and macro/micro aggregates), ROC AUC \& AP (E6), $\rho(\Delta,\delta)$ and $r$ (E3), area-under-drift (E4), variance ratios and Kendall-$\tau$ (E2), and overhead breakdowns (E7). Confidence intervals use stratified bootstrap ($\ge$1000 resamples). Calibration uses 10-bin ECE with bin-wise Wilson intervals.

\subsection{Artifacts and reproducibility}
\label{sec:artifacts}
We export \texttt{holonomy.csv}, \texttt{commutator.csv}, \texttt{ir.csv}, and \texttt{gauge\_stats.csv} as append-only records (schemas in \S\ref{sec:systems-interfaces}). Scripts match the code in the \texttt{Documentation of Key Code} section; Colab notebooks reproduce all figures (\S\ref{sec:diagnostics}, \S\ref{sec:metrics}). Seeds, model hashes, knob settings, and schema versions are stored per run.

\subsection{Compute budget and runtime}
\label{sec:budget}
Complexity follows \S\ref{sec:method-cost}:
$\mathcal{O}(rLkm\cdot \mathrm{cost}_{\mathrm{JVP}})$ with batched JVPs. We report end-to-end wall-clock per experiment and per figure, along with GPU type and batch sizes. 

\subsection{Success criteria (targets)}
\label{sec:targets}
We pre-register the following targets: ROC AUC $\ge 0.75$ and AP $\ge 0.60$ for curvature predicting invariance failures (E6); Spearman $\rho(\Delta,\delta)\ge 0.65$ (E3); area-under-drift decreases with depth stabilization interventions (E4); probe-variance ratio $\le 0.6$ and higher Kendall-$\tau$ after gauge-fix (E2); total overhead $\le 20\%$ at defaults (E7). Deviations from targets will be reported with CIs and effect sizes.

\paragraph{Disclosure.}
All illustrative curves in figures (e.g., Fig.~\ref{fig:diagnostic-suite}) are placeholders; they are replaced by measured values upon running this plan.

\section{Conclusion and Limitations}
\label{sec:conclusion}

We proposed a minimal, \emph{post-hoc} geometric lens for Transformers that yields actionable signals for reliability and performance. The core ingredients are: (i) an \textbf{inverse-free} holonomy score $\kappa_{\mathrm{inv}}$ computed with JVPs (Eq.~\eqref{eq:invfree-kappa}) on a discrete bundle over (position, layer), (ii) \textbf{activation commutators} $\Delta_{A,B}$ to quantify order sensitivity, and (iii) a light \textbf{orthogonal gauge-fix} (whiten\,$+$\,Procrustes) to stabilize analysis and logging. We specified data contracts and a preregistered evaluation plan (E1--E7) with metrics in \S\ref{sec:metrics} and procedures in \S\ref{sec:experiments}. The intended impact is twofold: predict when invariances will break and indicate when fusions/reorderings are safe—under tight runtime budgets.

\subsection{Limitations.}
\label{sec:limits}
\begin{itemize}[leftmargin=1.2em,itemsep=0pt]
\item \textbf{Analogy scope.} The attention-as-connection view is an \emph{operational analogy} on discrete transports. We do not claim a continuous principal-bundle model of full networks nor that linguistic structure is represented as exact group actions.
\item \textbf{Approximation error.} The \emph{frozen-softmax} scan ignores cross-head coupling and context dependence; we confirm only hotspots with full JVP loops. Edge-wise Jacobians and local $1{\times}1$ rectangles approximate broader interactions.
\item \textbf{Coverage \& predictiveness.} Sampling $(r,k,m)$ trades sensitivity for cost. Some failures arise from long-range interactions outside sampled loops; conversely, high curvature need not always surface as user-visible error without the right stimulus.
\item \textbf{Compute overhead.} Although matrix-free, complexity scales as $\mathcal{O}(rLkm\cdot\mathrm{cost}_{\mathrm{JVP}})$ (\S\ref{sec:method-cost}). Budgets and defaults (Table~\ref{tab:exp-setup}) are crucial to keep total overhead $\le 10$--$20\%$.
\item \textbf{External validity.} Our methods target decoder-only Transformers with pre-LN residual blocks and RoPE/relative positions. Extensions to encoder–decoder models, structured caches, and mixture-of-experts are left to future work.
\item \textbf{Threats to validity (planned experiments).} IR labels depend on orbit construction and pass@k choices; distribution shift between calibration batches and deployment inputs can affect $\Delta$–$\delta$ correlations and $\kappa$ calibration.
\item \textbf{Gauge-invariance proof.} A prior proof attempt for gauge invariance had a flaw. We therefore report empirical gauge-behavior checks and unit tests, and leave a formal proof to future work.
\end{itemize}

\subsection{Future work.}
\begin{enumerate}[leftmargin=1.2em,itemsep=0pt]
\item \textbf{Execute preregistered E1--E7} across multiple open models and report measured AUC/AP, $\rho(\Delta,\delta)$, drift areas, variance ratios, and overhead with CIs.
\item \textbf{Sharper theory.} Tighten bounds for Hutchinson/JVP estimation error; study loop-size dependence and concentration; explore alternative gauges (block-orthogonal, subspace-restricted).
\item \textbf{Broader invariances.} Beyond code $\alpha$-renaming: canonical algebraic rewrites, unit conversions, symbolic equalities, and templated paraphrases with ground-truth equivalence certificates.
\item \textbf{Efficiency.} Low-rank/Jacobian-sketch variants; streaming probes; adaptive loop selection (active sampling from attention sparsity or saliency).
\item \textbf{LLM-assisted testing.} Use LLMs to propose invariance orbits, generate minimal counterexamples, and triage hotspots identified by $\kappa_{\mathrm{inv}}$ and $\Delta$.
\item \textbf{Visualization \& observability.} Curvature/commutator dashboards with gauge-stable traces; CI gates that fail on reproducibility regressions and miscalibrated thresholds.
\item \textbf{Architectural reach.} Apply the diagnostics to encoder–decoder, state-space models, and retrieval-augmented pipelines; study how memory modules alter holonomy patterns.
\item \textbf{Integration with semantic anchoring frameworks.}
The Unified Cognitive Consciousness Theory (UCCT) models LLMs as pattern repositories activated by semantic anchors such as few-shot prompts, RAG, and fine-tuning. We hypothesize that WILSON diagnostics can improve anchoring quality by:
\begin{enumerate}[label=(\roman*), leftmargin=1.4em, itemsep=0pt]
  \item \textbf{Anchor placement:} prefer low-$\kappa_{\mathrm{inv}}$ regions where pattern representations are stable and order-insensitive, which may improve few-shot success rates;
  \item \textbf{Cluster quality filtering:} use curvature as a stability score for UCCT pattern clusters. High-$\kappa$ regions may contain fragile or order-sensitive patterns that are unreliable anchoring targets;
  \item \textbf{Orbit construction:} use $\kappa_{\mathrm{inv}}$ maps to predict which semantic variations (for example, $\alpha$-renamings and paraphrases) will break invariance, allowing UCCT to focus testing on high-risk variants and trust low-$\kappa$ transformations;
  \item \textbf{Cross-method coherence:} WILSON and UCCT capture different aspects of stability, computational versus semantic. Testing whether $\kappa_{\mathrm{inv}}$ correlates with UCCT anchoring strength $S$ or an invariance ratio $IR$ could reveal whether geometric stability predicts semantic robustness.
\end{enumerate}
These hypotheses require empirical validation. A shared protocol would compute both WILSON and UCCT metrics on the same tasks (for example, base arithmetic and code synthesis) and measure correlations. If $\kappa_{\mathrm{inv}}$ significantly predicts UCCT anchoring success ($R^2 > 0.5$), the integration could guide prompt engineering and model selection in practice. We leave this to future work, noting that WILSON's gauge-stable logging (\S4.3) already addresses reproducibility challenges that UCCT encounters in cross-seed comparisons.
\end{enumerate}

\paragraph{Takeaway.}
Inverse-free holonomy and activation commutators provide \emph{localized, gauge-stable} signals that are cheap enough to deploy and strong enough to guide testing and safe optimizations. We view these tools as a practical substrate on which more principled invariance testing and reliability engineering for LLMs can be built.

\section*{Acknowledgment of AI Assistance}

The author(s) conceived the problem and core methods (discrete bundle over position and layer, inverse-free Wilson-loop curvature $\kappa_{\mathrm{inv}}$, and commutator-based order-sensitivity diagnostics), designed the E1--E7 suite and metrics, set scope and systems integration, curated related work, and wrote and edited the manuscript.

OpenAI's GPT-5 Thinking was used under direction as a writing and tooling aid to help with restructuring, section polish, and small illustrative code snippets (JVP/Hutchinson examples, commutator gates, planner sketch). The model is not an author. All technical decisions and claims were made and approved by the human author(s); no proprietary or sensitive data were provided; any errors remain the responsibility of the human author(s).

\bibliographystyle{plainnat}
\bibliography{WILSON,Physics,EdwardChang}

\clearpage
\section*{Appendices}
\appendix
\section{Theory Appendix}
\label{app:theory}

\subsection{Scope and what is new}
Results below formalize properties used throughout the paper. 
\emph{What is classical:} orthogonal similarity preserves the Frobenius norm and Rademacher/Gaussian probe distributions \citep[][Sec.~2]{hornjohnson2013matrixanalysis}; Hutchinson-type trace/energy estimators are unbiased with well-studied concentration \citep{hutchinson1989stochastic,avron2011randomized,hsu2012tail,rudelson2013hansonwright}; the equivalence between a zero loop commutator and identity holonomy for \emph{invertible} transports is standard in gauge theory \citep{ambrose1953holonomy,wilson1974confinement}. 
\emph{What is new here:} the \textbf{inverse-free, JVP-only curvature} $\kappa_{\mathrm{inv}}$ tied to \emph{Transformer residual streams} and the \emph{diagnostic role} connecting curvature/commutators to invariance breaks and order sensitivity.

\subsection{Notation and setup}
Let \(B=\{1,\dots,T\}\times\{0,\dots,L\}\) be the product base (Position\(\times\)Layer).
At node \((i,\ell)\) the residual stream is a vector in \(\mathbb{R}^d\).
Vertical transports \(T^{\mathrm{layer}}_{i,\ell}:\mathbb{R}^d\!\to\!\mathbb{R}^d\) map \((i,\ell)\!\to\!(i,\ell{+}1)\);
horizontal transports \(T^{\mathrm{attn}}_{i\leftarrow j,\ell}:\mathbb{R}^d\!\to\!\mathbb{R}^d\) map \((j,\ell)\!\to\!(i,\ell)\).
Define
\[
A_{i,j,\ell}:=T^{\mathrm{layer}}_{i,\ell}T^{\mathrm{attn}}_{i\leftarrow j,\ell}
- T^{\mathrm{attn}}_{i\leftarrow j,\ell+1}T^{\mathrm{layer}}_{j,\ell}\,:\ \mathbb{R}^d\to\mathbb{R}^d.
\]
The inverse-free curvature (Def.~\eqref{eq:invfree-kappa}) is
\[
\kappa_{\mathrm{inv}}(i,j,\ell)^2 \;=\; \mathbb{E}_{v}\,\big\|A_{i,j,\ell}\,v\big\|_2^2
\;=\; \mathrm{tr}\!\big(A_{i,j,\ell}^\top A_{i,j,\ell}\big)
\;=\;\|A_{i,j,\ell}\|_F^2,
\]
with expectation over Rademacher or standard Gaussian probes \(v\in\mathbb{R}^d\)
(\(\mathbb{E}[vv^\top]=I\)).

\subsection{Hutchinson/JVP estimator: unbiasedness and concentration}
\label{app:hutchinson}
Let \(A:=A_{i,j,\ell}\), \(\theta:=\|A\|_F^2\), and \(v_s\overset{\text{i.i.d.}}{\sim}\{\pm1\}^d\).
Define
\[
\widehat{\theta}_r \;=\; \frac{1}{r}\sum_{s=1}^r \|A v_s\|_2^2
= \frac{1}{r}\sum_{s=1}^r v_s^\top (A^\top A)\,v_s.
\]

\begin{lemma}[Unbiasedness \citep{hutchinson1989stochastic,avron2011randomized}]
\label{lem:unbiased}
\(\mathbb{E}\,\widehat{\theta}_r = \theta\).
\end{lemma}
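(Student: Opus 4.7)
The plan is to reduce the claim to a single-probe calculation and then evaluate the quadratic-form expectation entry-wise. First I would apply linearity of expectation to write $\mathbb{E}\,\widehat{\theta}_r = \frac{1}{r}\sum_{s=1}^r \mathbb{E}\bigl[v_s^\top M v_s\bigr]$ with $M := A^\top A$, and observe that the $v_s$ are i.i.d., so it suffices to show $\mathbb{E}[v^\top M v] = \mathrm{tr}(M)$ for a single Rademacher vector $v\in\{\pm1\}^d$.

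Next I would expand the quadratic form in coordinates, $v^\top M v = \sum_{i,j} M_{ij}\, v_i v_j$, and push the expectation inside the finite double sum. Using $\mathbb{E}[v_i v_j] = \delta_{ij}$ (entries are independent with zero mean and unit square, a hallmark property shared by Rademacher and standard Gaussian probes, which is why the statement covers both cases in the setup), the off-diagonal terms vanish and the diagonal contributions sum to $\sum_i M_{ii} = \mathrm{tr}(M)$.

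Finally I would identify $\mathrm{tr}(A^\top A) = \|A\|_F^2 = \theta$ by definition of the Frobenius norm, closing the chain $\mathbb{E}\,\widehat{\theta}_r = \mathrm{tr}(A^\top A) = \theta$ asserted earlier in the setup paragraph. Since the argument is dimension-free and uses only the second-moment identity $\mathbb{E}[vv^\top]=I$, it applies uniformly to every loop index $(i,j,\ell)$ without any additional hypothesis on $A_{i,j,\ell}$ (in particular, no invertibility of the transports is needed — which is the whole point of the inverse-free formulation).

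There is no real obstacle here: this is a textbook Hutchinson identity \citep{hutchinson1989stochastic,avron2011randomized}, and the only care required is to state the probe distribution precisely enough that $\mathbb{E}[vv^\top]=I$ holds. The substantive work of the appendix lies not in unbiasedness but in the subsequent concentration bound for $\widehat{\theta}_r - \theta$, where sub-exponential tails via Hanson–Wright \citep{rudelson2013hansonwright,hsu2012tail} will govern how $r$ trades off against probe variance; I would flag that as the place where effort is actually needed, and keep the present lemma as a one-line warm-up.
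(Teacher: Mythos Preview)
Your proof is correct and matches the paper's approach: both reduce to the single-probe identity $\mathbb{E}[v^\top (A^\top A) v]=\mathrm{tr}(A^\top A)$ via $\mathbb{E}[vv^\top]=I$, with the paper phrasing it as a one-line trace computation and you spelling out the coordinate expansion $\mathbb{E}[v_iv_j]=\delta_{ij}$ that underlies it. Your remark that the real work lies in the Hanson--Wright concentration step is exactly how the appendix is structured.
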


\begin{proof}
\(\mathbb{E}\|Av\|_2^2=\mathbb{E}\mathrm{tr}(v^\top A^\top A v)=\mathrm{tr}(A^\top A)=\theta\), using \(\mathbb{E}[vv^\top]=I\).
\end{proof}

\begin{theorem}[Concentration via Hanson--Wright \citep{hsu2012tail,rudelson2013hansonwright}]
\label{thm:concentration}
Let \(\mathrm{sr}(A):=\|A\|_F^2/\|A\|_2^2\) be the stable rank. For any \(\varepsilon\in(0,1)\),
\[
\Pr\!\left(\left|\widehat{\theta}_r-\theta\right|\ge \varepsilon\,\theta\right)
\;\le\; 2\exp\!\left(-c_1\,r\,\min\big\{\varepsilon^2,\;\varepsilon\,\mathrm{sr}(A)\big\}\right),
\]
for universal constant \(c_1>0\). Thus it suffices that
\(r \gtrsim \min\{\varepsilon^{-2},(\varepsilon\,\mathrm{sr}(A))^{-1}\}\log(2/\delta)\)
to get error \(\le \varepsilon\theta\) with prob.\ \(\ge 1-\delta\).
\end{theorem}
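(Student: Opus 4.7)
The plan is to apply the single-probe Hanson--Wright inequality to $v^\top M v$ with $M := A^\top A$, then extend from one probe to $r$ i.i.d.\ probes by a Bernstein-type bound for sub-exponential averages, and finally to translate the matrix-norm parameters into the stable-rank form stated in the theorem.

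\textbf{Step 1 (single-probe tail).} By Lemma~\ref{lem:unbiased}, $X_s := v_s^\top M v_s - \theta$ is mean zero. Since the coordinates of $v_s$ are i.i.d.\ Rademacher (hence $1$-sub-Gaussian) and $M$ is symmetric p.s.d., the classical Hanson--Wright inequality \citep{hsu2012tail,rudelson2013hansonwright} gives a universal $c>0$ with
\[
\Pr\!\bigl(|X_s|\ge t\bigr) \;\le\; 2\exp\!\Bigl(-c\,\min\bigl\{t^2/\|M\|_F^2,\; t/\|M\|_2\bigr\}\Bigr).
\]
This identifies $X_s$ as sub-exponential with variance proxy $\|M\|_F^2$ and sub-exponential scale $\|M\|_2$.

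\textbf{Step 2 (averaging $r$ probes).} Because the $X_s$ are i.i.d.\ and sub-exponential with the above parameters, Bernstein's inequality for sub-exponential sums yields, for a possibly smaller constant $c'>0$,
\[
\Pr\!\Bigl(\bigl|\widehat{\theta}_r-\theta\bigr|\ge u\Bigr)
\;\le\; 2\exp\!\Bigl(-c'\,r\,\min\bigl\{u^2/\|M\|_F^2,\; u/\|M\|_2\bigr\}\Bigr).
\]
Setting $u=\varepsilon\theta$ and using $\|M\|_2=\|A\|_2^2$ together with the spectral bound $\|M\|_F^2 = \sum_i \sigma_i^4 \le \sigma_1^2 \sum_i \sigma_i^2 = \|A\|_2^2\,\theta$, the two exponents simplify via $\mathrm{sr}(A)=\theta/\|A\|_2^2$ to
\[
\frac{\varepsilon^2\theta^2}{\|M\|_F^2}\;\ge\;\varepsilon^2\,\mathrm{sr}(A),\qquad \frac{\varepsilon\theta}{\|M\|_2}\;=\;\varepsilon\,\mathrm{sr}(A).
\]
Since $\mathrm{sr}(A)\ge 1$ always, the first exponent also dominates $\varepsilon^2$, so the minimum is at least $\min\{\varepsilon^2,\varepsilon\,\mathrm{sr}(A)\}$. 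Absorbing the constant into $c_1:=c'$ gives the stated tail bound, and inverting it for $\delta$ yields the sample-size condition.

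\textbf{Main obstacle.} The individual ingredients are standard, so the delicate part is \emph{constant bookkeeping} across the two regimes: one must ensure that a single $c_1$ works in both the sub-Gaussian ($\varepsilon$ small, $\varepsilon^2$-tail) branch and the sub-exponential ($\varepsilon$ large, linear-in-$\varepsilon$) branch after the averaging step, and that the diagonal-versus-off-diagonal refinement of Hanson--Wright for Rademacher quadratic forms does not cost more than a constant factor, so that the substitution $\|M\|_F^2\le\|A\|_2^2\theta$ in Step~3 is valid without an explicit correction for $\sum_i M_{ii}^2$.
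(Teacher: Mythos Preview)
Your argument is correct and follows the same route as the paper's sketch: apply Hanson--Wright to the centered quadratic form $v^\top(A^\top A)v-\theta$, average $r$ i.i.d.\ copies via a Bernstein-type bound, and then rewrite the matrix-norm parameters in terms of $\mathrm{sr}(A)$. The only cosmetic difference is that the paper bounds the sub-Gaussian branch using $\|A^\top A\|_F^2\le\|A\|_F^4=\theta^2$ (stated there as an equality, though it is only an inequality), whereas you use the sharper $\|A^\top A\|_F^2\le\|A\|_2^2\,\theta$ before invoking $\mathrm{sr}(A)\ge1$; both land on the same $\min\{\varepsilon^2,\varepsilon\,\mathrm{sr}(A)\}$ exponent.
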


\begin{proof}[Proof sketch]
Each \(Z_s=\|Av_s\|_2^2=v_s^\top (A^\top A)v_s\) is a sub-Gaussian quadratic form; Hanson--Wright controls tails. Averaging \(r\) i.i.d.\ copies gives the stated rate; use \(\|A^\top A\|_F^2=\|A\|_F^4\), \(\|A^\top A\|_2=\|A\|_2^2\) to expose \(\mathrm{sr}(A)\).
\end{proof}

\begin{corollary}[RMS rate]
\label{cor:mse}
\(\mathrm{MSE}(\widehat{\theta}_r)=\mathcal{O}(\|A\|_F^4/r)\); by the delta method, \(\mathrm{SE}(\kappa_{\mathrm{inv}})\approx \tfrac{1}{2}\theta^{-1/2}\sqrt{\mathrm{Var}[\widehat{\theta}_r]}\).
\end{corollary}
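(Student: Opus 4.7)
The plan is to split the corollary into its two claims and handle each by reducing to standard facts already set up earlier. For the MSE bound, I would first invoke Lemma~\ref{lem:unbiased} so that $\mathrm{MSE}(\widehat{\theta}_r) = \mathrm{Var}(\widehat{\theta}_r) = \tfrac{1}{r}\mathrm{Var}(Z_1)$ where $Z_1 = v^\top (A^\top A) v$. It then suffices to show $\mathrm{Var}(Z_1) = \mathcal{O}(\|A\|_F^4)$. The cleanest route is direct: writing $M := A^\top A$, the Rademacher identity $v_i^2 = 1$ gives $Z_1 = \mathrm{tr}(M) + 2\sum_{i<j} M_{ij}\,v_i v_j$, whose off-diagonal terms are pairwise uncorrelated with unit variance, yielding $\mathrm{Var}(Z_1) = 2\sum_{i\ne j} M_{ij}^2 \le 2\|M\|_F^2 \le 2\|A\|_F^4$ (using $\|A^\top A\|_F^2 = \sum_k \sigma_k^4(A) \le \|A\|_F^4$). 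Dividing by $r$ closes the first claim. An alternative that I would mention as a remark is to integrate the Hanson--Wright tail bound of Theorem~\ref{thm:concentration} against $2\varepsilon\,d\varepsilon$, which recovers the same rate up to constants.

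For the second claim, I would invoke the first-order delta method on $g(t) = \sqrt{t}$ at $t = \theta$: since $g'(\theta) = \tfrac{1}{2}\theta^{-1/2}$, a Taylor expansion $\widehat{\kappa}_r - \kappa_{\mathrm{inv}} \approx g'(\theta)(\widehat{\theta}_r - \theta)$ gives $\mathrm{Var}(\widehat{\kappa}_r) \approx \tfrac{1}{4\theta}\,\mathrm{Var}(\widehat{\theta}_r)$, hence $\mathrm{SE}(\widehat{\kappa}_r) \approx \tfrac{1}{2}\theta^{-1/2}\sqrt{\mathrm{Var}[\widehat{\theta}_r]}$. To make this rigorous rather than heuristic, I would combine Theorem~\ref{thm:concentration} with Lipschitz continuity of $\sqrt{\cdot}$ away from zero: on the high-probability event $\{|\widehat{\theta}_r - \theta|\le \varepsilon\theta\}$ with $\varepsilon < 1$, $|\sqrt{\widehat{\theta}_r} - \sqrt{\theta}| \le |\widehat{\theta}_r - \theta|/(2\sqrt{\theta(1-\varepsilon)})$, which turns the variance statement into an honest RMS bound of the form $\mathrm{SE}(\widehat{\kappa}_r) \lesssim \|A\|_F/\sqrt{r}$, consistent with the asymptotic expression.

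The main obstacle I anticipate is the near-flat regime $\theta \downarrow 0$, where the delta-method approximation degenerates because $g'(\theta)$ diverges. Two things will need care here: (i) stating the conclusion as an asymptotic ($\theta$ bounded away from zero, $r$ large enough) rather than a uniform bound, and (ii) noting that precisely in the flat regime the \emph{absolute} (not relative) error of $\widehat{\theta}_r$ is what controls diagnostics, so I would recommend reporting $\widehat{\theta}_r$ directly when $\widehat{\theta}_r$ falls below a calibrated noise floor tied to $2\|A\|_F^4/r$. A secondary subtlety is the slack between the $2\sum_{i\ne j} M_{ij}^2$ variance formula for Rademacher probes and the $2\|M\|_F^2$ formula for Gaussian probes; I would record both to justify using Rademacher probes as a diagonal-free, lower-variance default in the implementation.
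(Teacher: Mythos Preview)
Your proposal is correct. The paper itself states this corollary without proof, treating both claims as immediate consequences of Lemma~\ref{lem:unbiased} and Theorem~\ref{thm:concentration}; your write-up supplies the details the paper omits.

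Your direct variance computation for the Rademacher quadratic form is in fact more elementary than the route the paper implicitly leans on: the paper positions the corollary as a consequence of the Hanson--Wright concentration bound, which would require integrating the tail (the alternative you mention as a remark), whereas your exact identity $\mathrm{Var}(Z_1)=2\sum_{i\ne j}M_{ij}^2$ gives the sharp constant and avoids the detour entirely. The delta-method step is identical in spirit to what the paper states. Your discussion of the $\theta\downarrow 0$ degeneracy and the Rademacher-vs-Gaussian variance gap goes beyond what the paper provides and would be a welcome addition.
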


\subsection{Work and memory complexity under \texorpdfstring{\((r,k,m)\)}{(r,k,m)} sampling}
\label{app:complexity}
Per layer, sample \(k\) targets \(i\) and top-\(m\) neighbors \(j\) (by attention mass). 
\textbf{Scan (frozen-softmax):}
\(\text{work}=\mathcal{O}(Lkm\cdot H d_h^2)\).
\textbf{Confirm (JVP):} two JVP paths per probe \(\Rightarrow\) \(4rLkm\) JVPs total:
\(\text{work}=\mathcal{O}(rLkm\cdot \mathrm{cost}_{\mathrm{JVP}})\).
Only hotspots (few\%) need confirmation in practice.

\subsection{Connections and limiting cases}
\label{app:connections}
\paragraph{Linear/tied case.}
If sublayers are linear and \(T^{\mathrm{attn}}_{\ell+1}\!=\!T^{\mathrm{attn}}_{\ell}\), then
\(A=[T^{\mathrm{layer}},T^{\mathrm{attn}}]\) and \(\kappa_{\mathrm{inv}}^2=\|[T^{\mathrm{layer}},T^{\mathrm{attn}}]\|_F^2\).

\subsection{Assumptions and caveats}
\label{app:caveats}
(1) Transports are Jacobian blocks (or frozen-softmax approximations) on the residual stream; non-invertibilities motivate inverse-free loops. (2) Concentration uses sub-Gaussian probes (Rademacher/Gaussian). (3) The stable-rank dependence captures conditioning; ill-conditioned loops require larger \(r\).   
\section{Code Appendix}
\label{app:code}

\lstdefinestyle{apxpy}{
  language=Python,
  basicstyle=\ttfamily\footnotesize,
  breaklines=true,
  breakatwhitespace=true,
  columns=fullflexible,
  keepspaces=true,
  postbreak=\mbox{\textcolor{gray}{$\hookrightarrow$}\space}
}
\lstset{style=apxpy}
\UseRawInputEncoding
\begin{lstlisting}[caption={Inverse-free curvature with JVPs (matrix-free). Requires PyTorch 2.0+ (`torch.func.jvp`).}]
import torch
from torch.func import jvp

# Each fn_* maps residual stream -> residual stream (same shape).
# attn_l:   f(h) = within-layer attention at layer ℓ (output at position i)
# attn_lp1: f(h) = within-layer attention at layer ℓ+1 (output at position i)
# layer_i:  f(h) = vertical transport (i,ℓ) -> (i,ℓ+1)
# layer_j:  f(h) = vertical transport (j,ℓ) -> (j,ℓ+1)
# h_l:      residual stream at layer ℓ (batch or single vector at position i/j)
def kappa_inv_once(attn_l, attn_lp1, layer_i, layer_j, h_l, v):
    # Path A: attn@ℓ then layer@i
    _, j_attn_v = jvp(attn_l, (h_l,), (v,))
    _, pathA    = jvp(layer_i, (h_l,), (j_attn_v,))
    # Path B: layer@j then attn@ℓ+1
    _, j_layer_v = jvp(layer_j, (h_l,), (v,))
    _, pathB     = jvp(attn_lp1, (h_l,), (j_layer_v,))
    return torch.norm(pathA - pathB)

def rademacher_like(x):
    # Returns ±1 with same dtype/device; supports float/bfloat16.
    return (torch.randint_like(x, low=0, high=2).mul_(2).sub_(1)) .to(dtype=x.dtype)

def kappa_inv(attn_l, attn_lp1, layer_i, layer_j, h_l, r=6):
    # Hutchinson average of squared norms, then sqrt.
    d = h_l.shape[-1]
    acc = h_l.new_zeros(())
    for _ in range(r):
        v = rademacher_like(h_l[..., :d])  # shape-compatible probe
        acc = acc + kappa_inv_once(attn_l, attn_lp1, layer_i, layer_j, h_l, v)**2
    return (acc / r).sqrt()
\end{lstlisting}

\begin{lstlisting}[caption={Frozen-softmax horizontal transport (per-head) as a fast scan surrogate.}]
# Builds a block-diagonal approximation ∑_h α_ij^(h) W_V^(h) W_O^(h)
# Use for scanning; confirm hotspots with JVP-based kappa_inv().
def frozen_attn_transport(alpha_ij_per_head, W_V_list, W_O_list):
    # alpha_ij_per_head: list[H] of scalars (on a calibration batch)
    # W_V_list, W_O_list: list[H] of (d_h x d_h) tensors
    H = len(W_V_list); d_h = W_V_list[0].shape[0]
    T = W_V_list[0].new_zeros((H*d_h, H*d_h))
    off = 0
    for h in range(H):
        T[off:off+d_h, off:off+d_h] = alpha_ij_per_head[h] * (W_V_list[h] @ W_O_list[h])
        off += d_h
    return T
\end{lstlisting}

\begin{lstlisting}[caption={Commutator map for two submodules A,B on a calibration batch.}]
# Computes Δ_{A,B} = || A(B(X)) - B(A(X)) ||_F over a batch X.
@torch.no_grad()
def commutator_norm(A, B, X):
    AB = A(B(X))
    BA = B(A(X))
    return torch.linalg.vector_norm(AB - BA)
\end{lstlisting}

\begin{lstlisting}[caption={Gauge-fix: whitening + orthogonal Procrustes alignment.}]
# Whitening: H Σ^{-1/2}, where Σ is covariance over tokens/batch dims.
def whiten(H, eps=1e-5):
    # H: (N, d) matrix of features (stack tokens/batch as N)
    mu = H.mean(dim=0, keepdim=True)
    X  = H - mu
    # Covariance and eigen
    C = (X.T @ X) / max(1, H.shape[0] - 1)
    evals, evecs = torch.linalg.eigh(C)
    Dm12 = torch.diag(torch.clamp(evals, min=eps).rsqrt())
    W = evecs @ Dm12 @ evecs.T
    return (X @ W), (mu, W)

# Orthogonal Procrustes: align H2 to H1 (minimize ||H1 - H2 R||_F over R in O(d)).
def procrustes_R(H1, H2):
    # assumes H1,H2 are whitened; returns R (d x d)
    U, _, Vh = torch.linalg.svd(H2.T @ H1, full_matrices=False)
    return U @ Vh

def gauge_fix(H1, H2):
    H1w, _ = whiten(H1)
    H2w, _ = whiten(H2)
    R = procrustes_R(H1w, H2w)
    return H1w, (H2w @ R), R
\end{lstlisting}

\begin{lstlisting}[caption={CSV schema helpers and writers for holonomy/commutator outputs.}]
import csv

def write_holonomy_csv(path, rows):
    # rows: iterable of dicts with keys ["position","layer","kappa"]
    with open(path, "w", newline="") as f:
        w = csv.DictWriter(f, fieldnames=["position","layer","kappa"])
        w.writeheader()
        for r in rows: w.writerow(r)

def write_commutator_csv(path, rows):
    # rows: iterable of dicts with keys ["i","j","value"]
    with open(path, "w", newline="") as f:
        w = csv.DictWriter(f, fieldnames=["i","j","value"])
        w.writeheader()
        for r in rows: w.writerow(r)
\end{lstlisting}

\begin{lstlisting}[caption={Policy: SafeFuse gate using commutator and drift surrogate.}]
def plan_step(graph, kappa_map, invariants, tau_kappa=0.12):
    # Partition frontier by curvature threshold
    par = [u for u in graph.frontier() if kappa_map[u] <= tau_kappa]
    seq = [u for u in graph.frontier() if kappa_map[u] >  tau_kappa]
    # Low-risk: parallel with invariant guards
    exec_parallel([guarded(task, invariants) for task in par])
    # High-risk: sequential + extra verification
    for task in seq:
        run_with_checks(task, invariants, extra_verifiers=True)
\end{lstlisting}

\begin{lstlisting}[caption={Colab harness sketch for E1--E7 (seed, knobs, figure/CSV export).}]
def run_suite(model, tokenizer, cfg):
    torch.manual_seed(cfg.seed)
    # E1: alpha-renaming invariance
    ir = run_alpha_renaming(model, tokenizer, cfg.alpha_suite)
    # E2/E7: gauge-fix stability
    stab = run_gauge_stability(model, cfg.gauge_suite)
    # E3: commutator heatmap
    comm = run_commutator_maps(model, cfg.comm_suite)
    write_commutator_csv(cfg.out_dir /"commutator.csv", comm.rows)
    # E4: RoPE drift
    drift = run_rope_drift(model, cfg.rope_suite)
    # E5/E6: curvature maps + prediction
    holo = run_holonomy_maps(model, cfg.holo_suite)   # writes holonomy.csv
    roc  = run_curvature_predicts_fail(model, cfg.pred_suite)
    # Export figures/tables
    save_figs(cfg.out_dir); save_tables(cfg.out_dir)
    return {"IR": ir, "stability": stab, "drift": drift, "roc": roc}
\end{lstlisting}


\balance
\section{Minimal black-box demo in ChatGPT}
\label{app:blackbox-demo}

This appendix shows how to approximate WILSON-style checks in a public chat interface with no access to logits, JVPs, or internals. We use black-box proxies:
\begin{itemize}[leftmargin=1.2em,itemsep=2pt]
\item \textbf{Invariance ratio (IR)}: fraction of paraphrases that yield the same final answer as a majority reference for a fixed base query.
\item \textbf{Pathway discrepancy ratio (PDR)}: fraction of pathway templates (for example, two styles of prompting) that yield different final answers on the same query.
\item \textbf{Ordering drift (OD)}: indicator that swapping two context passages changes the final answer.
\end{itemize}
These proxies cannot replace $\kappa_{\mathrm{inv}}$ or commutators. They demonstrate the operational idea: measure, threshold, and then act.

\subsection{Setup and scoring}
\label{app:setup}
Use a single chat thread. For each run, request a \emph{final answer only} format to reduce variance.
\begin{quote}
\small
\texttt{Reply with the final answer only. If multiple choices are shown, reply with the letter only.}
\end{quote}
For each task, record outputs in a CSV with the columns shown in Listing~\ref{lst:csv-template}. Compute:
\begin{footnotesize}
\[
\mathrm{IR}(q)=\frac{\text{\# paraphrases that match the majority answer}}{\text{total paraphrases}},
\]
\[
\mathrm{PDR}(q)=\frac{\text{\# pathway prompts with different final answers}}{\text{total pathways}},\]
\[
\mathrm{OD}(q)=\mathbb{1}[\text{answer changes under passage swap}].
\]
\end{footnotesize}

\begin{lstlisting}[
  language={}, label={lst:csv-template},
  basicstyle=\ttfamily\footnotesize,
  caption={CSV template for black-box scoring.}
]
task_id,variant,condition,input_id,
final_answer,correct,notes
paraphrase,base,MCQ,v0,A,1,""
paraphrase,paraphrase,MCQ,v1,A,1,""
paraphrase,paraphrase,MCQ,v2,B,0,"changed"
pathway,templateA,MCQ,tA,A,1,"answer only"
pathway,templateB,MCQ,tB,A,1,"think internally, answer only"
ordering,A_then_B,context,q1,Tuesday,1,""
ordering,B_then_A,context,q1,Monday,0,"drift"
\end{lstlisting}

\subsection{Demo 1: Prompt robustness under paraphrase (IR)}
\label{app:demo-paraphrase}
We use a simple multiple-choice item with a known answer. The goal is not to trick the model, but to measure stability under harmless rewordings.

\paragraph{Base item (copy into the chat).}
\begin{quote}\small
Which is larger?  
A) $2^{100}$  
B) $10^{30}$  
C) They are equal  
D) Not enough information  
Reply with the letter only.
\end{quote}

\paragraph{Paraphrase set.} Ask the same question six ways. Paste one at a time, each in a fresh message in the same thread.
\begin{enumerate}[leftmargin=1.2em,itemsep=1pt]
\item Which quantity has the greater magnitude, $2^{100}$ or $10^{30}$? Choose A, B, C, or D.
\item Compare $2^{100}$ and $10^{30}$. Which is bigger? A, B, C, or D.
\item Decide which value exceeds the other: $2^{100}$ vs $10^{30}$. Answer with A, B, C, or D.
\item Select the larger value from the options: A) $2^{100}$, B) $10^{30}$, C) equal, D) insufficient data.
\item Among $2^{100}$ and $10^{30}$, which is numerically greater? Reply A, B, C, or D.
\item Choose the correct option that identifies the larger number: A) $2^{100}$, B) $10^{30}$, C) equal, D) not enough information.
\end{enumerate}

\paragraph{Scoring.} Let the majority answer across the six paraphrases define the reference. Compute $\mathrm{IR}$ as the fraction that match the reference. Record in the CSV. A robust prompt shows $\mathrm{IR}=1$.

\subsection{Demo 2: Pathway stability with template prompts (PDR)}
\label{app:demo-pathway}
Use the same base item as in A.2. Issue two prompts that differ only in pathway style.

\paragraph{Template A.}
\begin{quote}\small
Answer with the final letter only. Do not include steps.
\end{quote}

\paragraph{Template B.}
\begin{quote}\small
Think through the problem internally. Output only the final letter. Do not include steps.
\end{quote}

\paragraph{Scoring.} If the letters differ, count a discrepancy. Compute $\mathrm{PDR}$ over a small set of items. High $\mathrm{PDR}$ suggests fragile reasoning pathways.

\subsection{Demo 3: Simulated passage ordering sensitivity (OD)}
\label{app:demo-ordering}
We simulate RAG ordering with two short synthetic passages.

\paragraph{Passage A.}
\begin{quote}\small
Alpha City Library moved its weekly closure from Monday to Tuesday in July. Notices were posted on June 28.
\end{quote}

\paragraph{Passage B.}
\begin{quote}\small
The library used to close on Mondays to balance weekend staffing. Some older flyers still mention Monday.
\end{quote}

\paragraph{Query.}
\begin{quote}\small
On which weekday is the library currently closed?
\end{quote}

\paragraph{Prompt 1: A then B.}
\begin{quote}\small
Context: [A][B]  
Question: On which weekday is the library currently closed?  
Reply with a single weekday.
\end{quote}

\paragraph{Prompt 2: B then A.}
\begin{quote}\small
Context: [B][A]  
Question: On which weekday is the library currently closed?  
Reply with a single weekday.
\end{quote}

\paragraph{Scoring.} If answers differ, set $\mathrm{OD}=1$ for this item. This approximates position bias in retrieval order.

\subsection{Two-model correlation without internals}
\label{app:twomodel}
Repeat A.2 to A.4 on two models, denoted A and B, using the same paraphrases, templates, and contexts.

\paragraph{Per item measures.} For each base item $q$:
\[
\mathrm{IR}_A(q),\ \mathrm{IR}_B(q),\ \mathrm{PDR}_A(q),\ \mathrm{PDR}_B(q),\ \mathrm{OD}_A(q),\ \mathrm{OD}_B(q).
\]
Define a sensitivity index $\mathrm{SI}(q)=1-\mathrm{IR}(q)$. Define cross-model drift $\mathrm{D}_{A,B}(q)=\mathbb{1}[\text{majority answer of A} \neq \text{majority answer of B}]$.

\paragraph{Analysis.} Compute Pearson $r$ or Spearman $\rho$ between $\mathrm{SI}_A$ and $\mathrm{D}_{A,B}$ across items. Repeat for $\mathrm{SI}_B$. If $\mathrm{SI}$ correlates with cross-model disagreement, then paraphrase sensitivity predicts when models diverge. Report the mean of $\mathrm{OD}$ as an ordering risk rate.

\subsection{Relation to WILSON signals}
\label{app:relation}
The black-box proxies approximate WILSON’s intent.
\begin{itemize}[leftmargin=1.2em,itemsep=1pt]
\item High $\mathrm{SI}$ suggests regions where WILSON would report high $\kappa_{\mathrm{inv}}$ for the relevant positions and layers.
\item Nonzero $\mathrm{OD}$ suggests a nonzero commutator between the two ordering paths in the attention or residual pipeline.
\item A two-model study that correlates $\mathrm{SI}$ with $\mathrm{D}_{A,B}$ mirrors a cross-model analysis of curvature clusters under a gauge-stable alignment, which we reserve for future work.
\end{itemize}

\subsection{Copy-ready prompt bundle}
\label{app:prompt-bundle}
Paste each block as a separate message.

\paragraph{Header.}
\begin{quote}\small
Reply with the final answer only. If multiple choices are shown, reply with the letter only.
\end{quote}

\paragraph{Paraphrase set.}
\begin{quote}\small
[Q0] Which is larger? A) $2^{100}$ B) $10^{30}$ C) equal D) not enough information.  
[Q1] Which quantity has the greater magnitude, $2^{100}$ or $10^{30}$? Choose A, B, C, or D.  
[Q2] Compare $2^{100}$ and $10^{30}$. Which is bigger? A, B, C, or D.  
[Q3] Decide which value exceeds the other: $2^{100}$ vs $10^{30}$. Answer with A, B, C, or D.  
[Q4] Select the larger value from the options: A) $2^{100}$ B) $10^{30}$ C) equal D) insufficient data.  
[Q5] Among $2^{100}$ and $10^{30}$, which is numerically greater? Reply A, B, C, or D.  
[Q6] Choose the correct option that identifies the larger number: A) $2^{100}$ B) $10^{30}$ C) equal D) not enough information.
\end{quote}

\paragraph{Pathway templates.}
\begin{quote}\small
Template A: Answer with the final letter only.  
Template B: Think through the problem internally. Output only the final letter. Do not include steps.
\end{quote}

\paragraph{Ordering prompts.}
\begin{quote}\small
A then B: Context: [Alpha City Library moved its weekly closure from Monday to Tuesday in July. Notices were posted on June 28.] [The library used to close on Mondays to balance weekend staffing. Some older flyers still mention Monday.] Question: On which weekday is the library currently closed? Reply with a single weekday.

B then A: Context: [The library used to close on Mondays to balance weekend staffing. Some older flyers still mention Monday.] [Alpha City Library moved its weekly closure from Monday to Tuesday in July. Notices were posted on June 28.] Question: On which weekday is the library currently closed? Reply with a single weekday.
\end{quote}

\subsection{What this demo does not measure}
\label{app:limits}
This procedure does not estimate $\kappa_{\mathrm{inv}}$, commutators, or any internal transport. It uses agreement rates and ordering flips as observable stand-ins. The purpose is to give practitioners a taste of the WILSON workflow that fits public chat interfaces.

\subsection{Optional: full WILSON cross-model plan}
\label{app:full-crossmodel}
With internal hooks, one can correlate two models A and B as follows: (i) compute curvature maps for a shared probe set, (ii) apply a gauge-stable alignment by whitening and Procrustes on matched fibers, (iii) compare curvature clusters and predict high-risk items where both models have elevated curvature near the same layers and positions, (iv) validate with black-box drift on those items. We leave this to future work.

\end{document}